%% file: arxiv.tex
\pdfoutput=1
\documentclass[sigconf]{acmart}

\copyrightyear{2026}
\acmYear{2026}
\setcopyright{cc}
\setcctype{by}
\acmConference[CIKM '26]{Proceedings of the 35th ACM International Conference on Information and Knowledge Management}{November 07--11, 2026}{Rome, Italy}
\acmBooktitle{Proceedings of the 35th ACM International Conference on Information and Knowledge Management (CIKM '26), November 07--11, 2026, Rome, Italy}
\acmDOI{10.1145/3799682.3841161}
\acmISBN{979-8-4007-2539-5/2026/11}

\usepackage{amsthm}
\usepackage{subcaption}
\usepackage{mathtools}
\usepackage{multirow}
\usepackage[capitalize,noabbrev]{cleveref}
\crefname{assumption}{Assumption}{Assumptions}
\Crefname{assumption}{Assumption}{Assumptions}
\crefname{proposition}{Proposition}{Propositions}
\Crefname{proposition}{Proposition}{Propositions}
\crefname{corollary}{Corollary}{Corollaries}
\Crefname{corollary}{Corollary}{Corollaries}

\usepackage{tikz}
\definecolor{bodyink}{HTML}{1F2937}
\definecolor{mutedtext}{HTML}{6B7280}
\definecolor{hairline}{HTML}{D1D5DB}
\definecolor{insightfill}{HTML}{FCFCFA}
\definecolor{catGood}{HTML}{3C6B49}
\definecolor{catBad}{HTML}{B5832A}
\definecolor{catOracle}{HTML}{2F5E8A}

\theoremstyle{plain}
\newtheorem{theorem}{Theorem}[section]
\newtheorem{proposition}[theorem]{Proposition}
\newtheorem{corollary}[theorem]{Corollary}
\theoremstyle{definition}
\newtheorem{assumption}[theorem]{Assumption}

\newcommand{\Pa}{\operatorname{Pa}}
\newcommand{\Ch}{\operatorname{Ch}}
\newcommand{\Sp}{\operatorname{Sp}}
\newcommand{\indep}{\mathrel{\perp\!\!\!\perp}}
\newcommand{\E}{\mathbb{E}}
\newcommand{\G}{\mathcal{G}}
\newcommand{\X}{\mathbf{X}}
\newcommand{\SCMBench}{\textsc{SCM3K}}
\newcommand{\RMSE}{\operatorname{RMSE}}

\microtypesetup{expansion=false}

\begin{document}

\title[The Good, the Bad, and the Ugly of MB]{The Good, the Bad, and
  the Ugly of Markov Boundary for Tabular Prediction}

\author{Shu Wan}
\email{swan@asu.edu}
\orcid{0000-0003-0725-3644}
\affiliation{%
  \institution{Arizona State University}
  \city{Tempe}
  \country{United States}
}

\author{Abhinav Gorantla}
\email{agorant2@asu.edu}
\orcid{0009-0003-1242-5671}
\affiliation{%
  \institution{Arizona State University}
  \city{Tempe}
  \country{United States}
}

\author{Huan Liu}
\email{huanliu@asu.edu}
\orcid{0000-0002-3264-7904}
\affiliation{%
  \institution{Arizona State University}
  \city{Tempe}
  \country{United States}
}

\author{K. Sel\c{c}uk Candan}
\email{candan@asu.edu}
\orcid{0000-0003-4977-6646}
\affiliation{%
  \institution{Arizona State University}
  \city{Tempe}
  \country{United States}
}

\renewcommand{\shortauthors}{Shu Wan, Abhinav Gorantla, Huan Liu, and K. Sel\c{c}uk Candan}

\keywords{Markov boundary, Markov-blanket discovery, tabular
  prediction, feature selection, causal discovery, structural causal
  models}

\begin{CCSXML}
<ccs2012>
   <concept>
       <concept_id>10010147.10010257.10010258.10010259</concept_id>
       <concept_desc>Computing methodologies~Supervised learning by regression</concept_desc>
       <concept_significance>500</concept_significance>
   </concept>
   <concept>
       <concept_id>10010147.10010257.10010293.10011809</concept_id>
       <concept_desc>Computing methodologies~Feature selection</concept_desc>
       <concept_significance>300</concept_significance>
   </concept>
   <concept>
       <concept_id>10010147.10010178.10010179.10010182</concept_id>
       <concept_desc>Computing methodologies~Causal reasoning and diagnostics</concept_desc>
       <concept_significance>300</concept_significance>
   </concept>
</ccs2012>
\end{CCSXML}

\ccsdesc[500]{Computing methodologies~Supervised learning by regression}
\ccsdesc[300]{Computing methodologies~Feature selection}
\ccsdesc[300]{Computing methodologies~Causal reasoning and diagnostics}

\begin{abstract}
Under standard graphical assumptions, the Markov boundary of a target
variable is the smallest set of features that renders every other
feature redundant. Once the boundary is observed, the target is
conditionally independent of the rest of the table. This is a tempting
object for tabular prediction, since it names exactly the columns a
model should need. Yet modern regressors are still trained on the
full feature set. We ask whether the Markov boundary is genuinely
useful for prediction on \SCMBench{}, a 3,450-task synthetic SCM
benchmark with feature counts from $40$ to $1000$ and six SCM
families, evaluated with six regressors.
The answer is more nuanced than the
theory suggests. Restricting a regressor to the oracle boundary often
improves prediction substantially, and the improvement grows as the
feature space becomes larger and sparser. But the natural pipeline of
recovering the boundary with causal discovery and training on the
recovered mask does not deliver. Existing estimators exhaust the
compute budget before reaching the regime where the boundary helps
most, and even where they run they rarely beat the full feature set.
We trace this to three causes. Discovery optimizes structural
recovery rather than prediction. False negatives and false positives
carry sharply asymmetric predictive cost. The exact boundary is
only one of many feature sets that beat all features.
We then develop
what these facts imply for prediction-aligned feature selection and for
tabular models that learn to use causal structure.
\end{abstract}

\maketitle

\section{Introduction}
\label{sec:intro}

The goal of tabular prediction is to estimate a target variable $Y$
from a table of candidate features. Two properties make a feature set
ideal for this task. It should be \emph{sufficient}. Conditioning on
it must preserve everything the table reveals about $Y$, so that no
predictive signal is discarded. It should also be \emph{minimal}. It
should hold nothing beyond what sufficiency demands, so that the
learner is not charged for redundant columns. Causal graphical models
give these two properties a single, precise solution. For a target
$Y$ in a directed graphical model, the Markov boundary $B(Y)$ consists
of its parents, its children, and the other parents of those children.
It is the unique smallest feature set for which $Y$ is conditionally
independent of every remaining feature once $B(Y)$ is observed
\citep{10.5555/534975,10.5555/1795555}. Under the standard Markov and
faithfulness assumptions \citep{10.5555/1642718,spirtes2000causation},
the boundary is at once minimal and sufficient.

Minimality and sufficiency, however, are not all a practitioner
needs. A selection procedure must also be \emph{scalable}. It
has to stay tractable as a table widens to hundreds or thousands of
columns. On this third axis the comfortable picture breaks. Classical
Markov boundary and causal discovery algorithms recover a minimal
sufficient set by design, but lean on independence search
whose cost grows steeply with the feature count
\citep{NIPS1999_5d79099f,aliferis2003hiton,tsamardinos2003algorithms}.
Models with implicit feature selection, such as the LASSO
\citep{10.1111/j.2517-6161.1996.tb02080.x} and tree-based ensembles
\citep{10.1145/2939672.2939785}, scale well and select sparse
predictors, but their selection follows marginal predictive
correlation rather than the conditioning structure, so the retained
set need not be sufficient. \Cref{fig:trinity} places
these families at the edges of a triangle whose corners are
scalability, minimality, and sufficiency. Each method reaches two
corners, and none reaches all three.

\begin{figure}[t]
  \centering
  \input{figs/src/trinity.tikz}
  \Description{Triangle with corners Sufficiency, Scalability, and
    Minimality. Tabular foundation models sit on the
    Scalability--Sufficiency edge, Markov boundary discovery on the
    Sufficiency--Minimality edge, and implicit feature selection on
    the Scalability--Minimality edge. The center is labeled Open
    design space.}
  \caption{Three desiderata of a feature-selection method for tabular prediction.}
  \label{fig:trinity}
\end{figure}

The remaining edge belongs to the regressors that increasingly define
tabular prediction. TabPFN and TabICL are transformers pre-trained on
millions of synthetic
prediction tasks. At test time they ingest an entire table in context
and predict without any per-dataset fitting
\citep{hollmann2022tabpfn,qu2025tabicl,muller2022transformers}. They
are fast, accurate, and scalable, yet by construction they consume
every column handed to them. Feature selection is simply not part of
the recipe. The model is trusted to discount whatever is irrelevant on
its own. Whether that trust is warranted is an empirical question.
Whether a regressor still gains from seeing only the columns that
matter is the question this paper studies.

We state the question plainly. \emph{Is the Markov boundary useful for tabular
prediction?} If $B(Y)$ is minimal and sufficient, a regressor
restricted to $B(Y)$ should lose no predictive information while
carrying far fewer columns. But the full table is sufficient too, so
population theory alone cannot answer the question. It is a
finite-sample claim about how a particular regressor reacts to
redundant columns, and it has to be measured.

We measure it with \SCMBench{}\footnote{\url{https://huggingface.co/datasets/CSE472-blanket-challenge/SCM3K}}, a controlled benchmark of 3,450
synthetic SCM tasks. It pairs Erd\H{o}s--R\'enyi DAGs with six SCM
families. It sweeps the
candidate feature count from $40$ to $1000$ and evaluates six
regressors. They include shrinkage baselines, a tree ensemble, a
neural network, and tabular foundation models. For every
task and regressor we compare the test error of training on all
features against training on the oracle Markov boundary, and call
their difference the \emph{MB gap}.

The story that emerges is more tangled than the theory predicts.
First, the boundary delivers. Restricting to $B(Y)$ improves
prediction for most regressors, and the MB gap widens steadily as the
feature space grows larger and sparser. Encouraged by this, we test
the obvious pipeline. We estimate the boundary with off-the-shelf
Markov-boundary and causal discovery, then train on the recovered
mask. The result disappoints. The estimators exhaust the compute budget long
before reaching the high-dimensional regime where the gap is largest,
and even where they do run, the recovered masks rarely beat the full
feature set. We then ask why, and find three reasons. Causal discovery
optimizes structural recovery, which is not the same objective as
prediction. Missing a boundary feature and adding a redundant one are
scored identically by recovery metrics, yet they carry sharply
asymmetric predictive cost. And the exact boundary is not the only
good answer. Many feature sets that differ from $B(Y)$ still beat the
full table. The three desiderata of \Cref{fig:trinity} pull against
one another exactly in the regime where prediction is hardest.

This is an inconvenient truth, but a generative one. Exact boundary
recovery is the objective inherited from causal discovery, but it is
the wrong target when the goal is prediction. The evidence points to
feature selection that is scalable, prediction-aligned, and willing to
trade strict
minimality for robustness. We close by developing these
implications
into concrete directions, from scaling boundary estimation through
amortized pre-training to co-learning the feature mask and the
predictor together.

\Cref{sec:background} sets up Markov boundary optimality and the SCM
benchmark. \Cref{sec:mb-gap} measures the oracle MB gap.
\Cref{sec:bad} tests the estimate-then-predict pipeline, and
\Cref{sec:ugly} dissects why it fails. \Cref{sec:beyond}
characterizes prediction-useful feature sets beyond the exact
boundary. \Cref{sec:implications} develops the implications into
research directions, \Cref{sec:related-work} situates the paper, and
\Cref{sec:conclusion} concludes.

\section{Preliminaries}
\label{sec:background}

\subsection{Markov boundary fundamentals}
\label{sec:mb-defs}

Let $\X=(X_1,\ldots,X_F)$ be the candidate features and let $Y$ be the
regression target. For $S\subseteq [F]$, write $\X_S$ for the
restricted feature vector and define the population squared-loss risk
\begin{equation}
R^*(S)=\E\!\left[(Y-\E[Y\mid \X_S])^2\right].
\label{eq:subset-risk}
\end{equation}
We call $S$ \emph{Bayes sufficient} when $R^*(S)=R^*([F])$.

For a DAG $\G$ over $\{Y,X_1,\ldots,X_F\}$, the graphical Markov
boundary of $Y$ is
\begin{equation}
B \equiv B(Y)=\Pa(Y)\cup\Ch(Y)\cup\Sp(Y),
\label{eq:boundary}
\end{equation}
where $\Sp(Y)$ are the other parents of $Y$'s children. We write
$k=|B|$, $\rho=k/F$ for the boundary fraction, and
\texttt{redundancy\_ratio}$=1-\rho$ in the empirical models.

\begin{assumption}[Graphical sufficiency and boundary non-degeneracy]
\label{ass:graphical}
The joint law of $(Y,\X)$ is Markov and faithful to a DAG $\G$ over
$Y$ and the features, and admits a density that is positive on a
product of the coordinate supports. The same DAG $\G$ governs the
training and test distributions, so the Markov boundary of $Y$ is invariant
across the train/test split. For every $j\in B(Y)$, the map
$x_j\mapsto\E[Y\mid \X_B=x_B]$ is non-constant on a set of positive
measure.
\end{assumption}

The product-support condition lets a conditional-mean identity rule
out a missing boundary coordinate; it holds for \SCMBench{}, whose
nodes carry independent full-support noise. The last condition asks
only that every boundary variable move the conditional mean of $Y$. It fails in the degenerate case where a
variable affects higher moments, such as the noise variance, without
affecting the conditional expectation.

\begin{theorem}[Boundary optimality]
\label{thm:mb-bayes}
Under \Cref{ass:graphical}, $B(Y)$ is Bayes sufficient and every
Bayes-sufficient set contains $B(Y)$. Hence $B(Y)$ is the unique
inclusion-minimal Bayes-sufficient feature set, and
\begin{equation}
R^*(B)=R^*([F]).
\end{equation}
\end{theorem}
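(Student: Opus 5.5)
The proof has two halves: sufficiency of $B=B(Y)$, and containment of $B$ in every Bayes-sufficient set. Minimality and uniqueness then follow at once. If every Bayes-sufficient set contains $B$ and $B$ is itself Bayes sufficient, then $B$ is inclusion-minimal. Any other inclusion-minimal Bayes-sufficient set $S$ would contain $B$, and minimality of $S$ then forces $S=B$.

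For sufficiency I use the global Markov property. In $\G$, every path from $Y$ to a node outside $B\cup\{Y\}$ is blocked by $B$. A path leaving $Y$ through a parent is blocked at that parent, which is a non-collider in $B$. A path through a child $C$ either continues out of $C$ as a non-collider, and is blocked there, or enters a spouse, which lies in $B$ and is a non-collider on that path. So $Y\indep \X_{[F]\setminus B}\mid \X_B$, which gives $\E[Y\mid \X]=\E[Y\mid \X_B]$ almost surely. The tower property then yields $R^*([F])=\E[(Y-\E[Y\mid\X_B])^2]=R^*(B)$. Here I use $R^*(S)=\E[Y^2]-\E[\E[Y\mid\X_S]^2]$, assuming square integrability of $Y$.

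For necessity, let $S$ be Bayes sufficient and suppose some $j\in B\setminus S$. Since $S\subseteq[F]$, the Pythagorean identity gives $R^*(S)-R^*([F])=\E[(\E[Y\mid\X]-\E[Y\mid\X_S])^2]$, so sufficiency forces $\E[Y\mid\X]=\E[Y\mid\X_S]$ almost surely. Write $m(x_B)=\E[Y\mid\X_B=x_B]$, which equals $\E[Y\mid\X]$ by the first part. Then $m(\X_B)$ is almost surely a function of $\X_S$ alone, and in particular it does not depend on $x_j$ once the remaining coordinates of $S\cup B$ are fixed. The idea is to use positivity of the density on a product of supports. Conditional on $\X_{(S\cup B)\setminus\{j\}}$, the coordinate $X_j$ ranges over its full support, because the conditional density of $X_j$ is positive wherever the marginal is. An almost-sure equality $m(x_B)=g(x_S)$ therefore holds for almost every $x_j$ at almost every value of the other coordinates, by Fubini. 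So $x_j\mapsto m(x_B)$ is almost everywhere constant for almost every setting of $x_{B\setminus\{j\}}$. This contradicts the non-degeneracy clause of \Cref{ass:graphical}, provided that clause is read as non-constancy in $x_j$ on a positive-measure set of the other coordinates. That is the natural reading, and I would state it explicitly.

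The main obstacle is this last measure-theoretic step. It requires converting an almost-sure functional equality into pointwise constancy in $x_j$, and the product-support assumption is exactly what prevents $X_j$ from being almost determined by the other variables. Faithfulness is not actually needed for the risk statement beyond ensuring that $B$ is the graphical boundary. I would note that non-degeneracy replaces the faithfulness-based argument, since a conditional-independence argument alone would only give distributional dependence, not dependence of the mean.
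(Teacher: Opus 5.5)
Your proposal is correct and follows the paper's proof. Both use the separator property to get $\E[Y\mid\X]=\E[Y\mid\X_B]$, the orthogonal decomposition to reduce Bayes sufficiency of $S$ to $\E[Y\mid\X_S]=\E[Y\mid\X_B]$ almost surely, and product support plus the non-degeneracy clause to rule out omitting any $j\in B$. You spell out the d-separation check and the Fubini step that the paper compresses into one sentence, and your almost-everywhere reading of the non-degeneracy clause is the one the paper's argument implicitly needs.
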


\begin{proof}
The separator property gives $Y\indep \X_{[F]\setminus B}\mid \X_B$,
so $\E[Y\mid \X]=\E[Y\mid \X_B]$ and $R^*(B)=R^*([F])$. Since the
residual $Y-\E[Y\mid\X]$ is uncorrelated with every function of $\X$,
for any $S$
\begin{equation}
R^*(S)=R^*(B)
  +\E\!\left[\left(\E[Y\mid\X_B]-\E[Y\mid\X_S]\right)^{2}\right],
\label{eq:risk-gap}
\end{equation}
so $S$ is Bayes sufficient exactly when $\E[Y\mid\X_S]=\E[Y\mid\X_B]$
almost surely. Suppose such an $S$ omits some $j\in B$, and write
$g(\X_B)=\E[Y\mid\X_B]$. Then $g(\X_B)$ equals a function of $\X_S$
alone. Conditioning on $\X_S$, product support leaves $\X_{B\setminus S}$
with full conditional support, so $g$ must be constant in $x_j$,
contradicting \Cref{ass:graphical}. Hence $B\subseteq S$; since $B$ is
itself Bayes sufficient, it is the unique inclusion-minimal such set.
\end{proof}

\begin{corollary}[All features are also sufficient]
\label{cor:all-sufficient}
The full feature set $[F]$ is Bayes sufficient. Population sufficiency
alone therefore does not distinguish $B$ from $[F]$.
\end{corollary}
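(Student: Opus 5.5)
The corollary follows almost immediately from the definition of Bayes sufficiency, so I would argue it directly and then use \Cref{thm:mb-bayes} only to interpret the second sentence. By definition, $S$ is Bayes sufficient when $R^*(S)=R^*([F])$. Taking $S=[F]$, the condition reads $R^*([F])=R^*([F])$, which holds trivially. No graphical assumption is needed for this step. The population risk in \eqref{eq:subset-risk} is well defined whenever $Y$ has finite second moment, and that is already implicit in the definition.

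For the second sentence, I would invoke \Cref{thm:mb-bayes}: under \Cref{ass:graphical}, $R^*(B)=R^*([F])$. Combining this with the trivial identity above gives $R^*(B)=R^*([F])$ with both sets Bayes sufficient. The population criterion therefore assigns them the same value and cannot prefer one over the other. The ordering from \Cref{thm:mb-bayes} is only by inclusion: $B\subseteq[F]$, and $B$ is minimal. Since $R^*$ is the criterion that defines sufficiency and it ties the two sets, any preference for $B$ must come from outside population risk, such as finite-sample estimation error. This is the point the paper goes on to measure.

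I would also record the decomposition \eqref{eq:risk-gap} with $S=[F]$ as a consistency check. There the extra term is $\E[(\E[Y\mid\X_B]-\E[Y\mid\X])^2]=0$, because the separator property gives $\E[Y\mid\X]=\E[Y\mid\X_B]$.

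There is no real obstacle here. The only care needed is to state clearly that ``does not distinguish'' means equality of $R^*$, so the claim is not misread as saying $B$ and $[F]$ are equal as sets.
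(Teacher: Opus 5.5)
Your proposal is correct and follows the paper's route: $[F]$ is Bayes sufficient directly from the definition $R^*(S)=R^*([F])$ at $S=[F]$, and the second sentence rests on the identity $R^*(B)=R^*([F])$ from \Cref{thm:mb-bayes}. The paper gives no separate proof, and your added consistency check via \eqref{eq:risk-gap} is correct but not needed.
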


\subsection{SCM3K benchmark dataset}
\label{sec:dgp}

\SCMBench{} is a family of 3,450 synthetic SCM tasks.
Each split fixes
$F\in\{40,60,80,100,200,400,600,800,1000\}$, sets the DAG size to
$F+1$ nodes, and treats one node as the target. Low-dimensional splits
($F\le100$) use dense Erd\H{o}s--R\'enyi graphs with densities
0.2 and 0.4, with target MB-ratio band $[0.10,0.90]$.
Higher-dimensional splits use sparse graphs with densities 0.01,
0.02, and 0.04, with band $[0.05,0.95]$ \citep{erdos1960evolution}.

Each DAG is paired with six SCM families. They are linear Gaussian,
linear non-Gaussian, additive Gaussian, additive non-Gaussian,
post-nonlinear, and heteroskedastic.
Each task has $n=1000$ samples, \texttt{coeff\_range} $=1.0$,
and
\texttt{noise\_std} $=0.5$. Nodes are generated in topological order and
standardized after assignment.
Varsortability~\citep{reisach2021beware} and $R^2$-sortability~\citep{3666122.3666158} are addressed to avoid unexpected information leakage. The target
is selected by the MB-ratio band alone. Parentless targets may qualify,
but empty boundaries are excluded by the lower bound.
\Cref{tab:benchmark} summarizes the split design.

\begin{table}[t]
  \centering
  \caption{\SCMBench{} datasets statistics.}
  \label{tab:benchmark}
  \small
  \setlength{\tabcolsep}{4pt}
  \begin{tabular}{@{}lccc@{}}
    \toprule
    Splits & DAG density & MB-ratio band & Tasks per $F$ \\
    \midrule
    $F\le100$ & $\{0.2,0.4\}$ & $[0.10,0.90]$ & $300$ \\
    $F\ge200$ & $\{0.01,0.02,0.04\}$ & $[0.05,0.95]$ & $450$ \\
    \bottomrule
  \end{tabular}
\end{table}

\subsection{Prediction quantities}
\label{sec:notation}

For a fixed regressor $R$ and feature subset $S$, let
$\RMSE_R(S)$ be the test RMSE when $R$ is trained and queried using
only $\X_S$. The absolute and relative MB gaps are
\begin{align}
\Delta_{\mathrm{MB}}(R)
  &= \RMSE_R([F])-\RMSE_R(B), \label{eq:mb-gap-abs}\\
\delta_{\mathrm{MB}}(R)
  &= \Delta_{\mathrm{MB}}(R)/\RMSE_R([F]).
  \label{eq:mb-gap-rel}
\end{align}
Positive values mean that the oracle boundary improves prediction
over all features. For estimated masks $\widehat S$, we use the same
prediction gain scale.
\begin{equation}
\operatorname{prediction\_gain}(\widehat S)
  =\RMSE_R([F])-\RMSE_R(\widehat S).
\label{eq:prediction-gain}
\end{equation}
Precision, recall, false positives, and false negatives are always
computed against the oracle boundary $B$.

\begin{figure*}[htp]
  \centering
  \includegraphics[width=.85\textwidth]{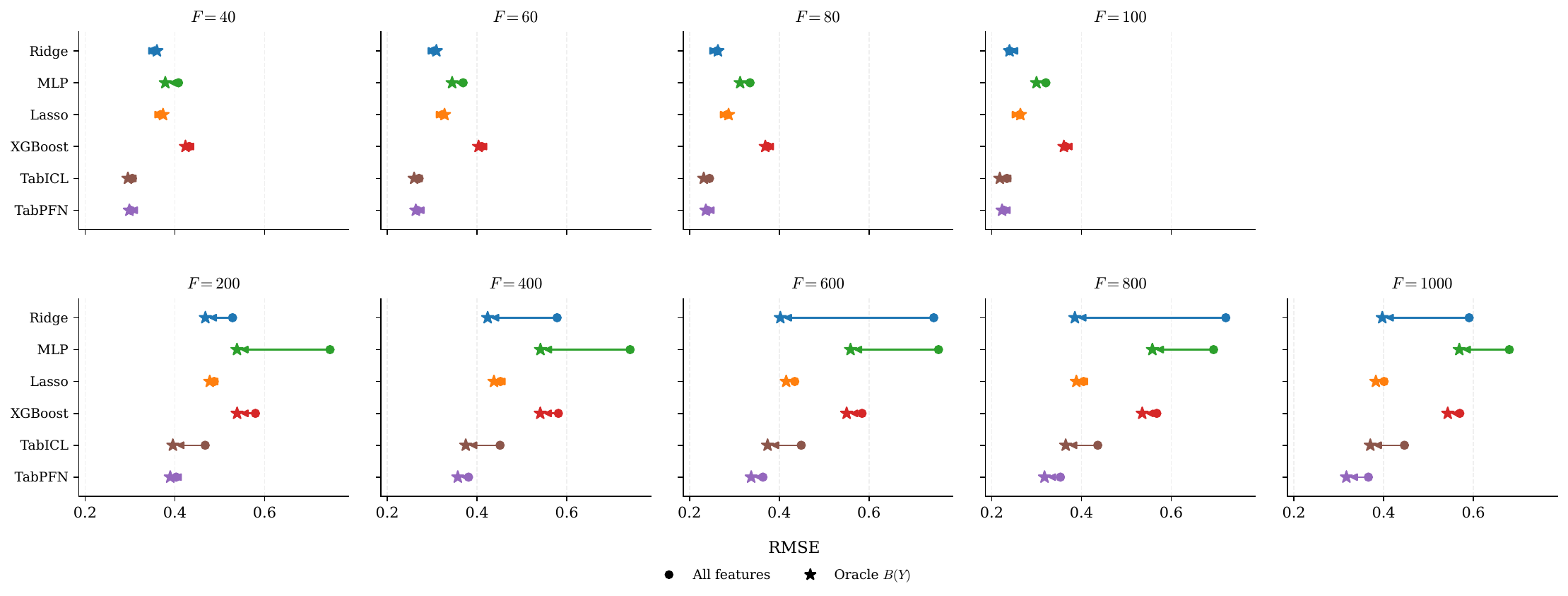}
  \caption{Oracle MB gap by downstream regressor. The
    boundary helps most for regressors that pay a high finite-sample
    cost for extra columns, and least for models with strong implicit
    feature selection.}
  \Description{Six-panel plot showing median relative MB gap with
    interquartile bands against feature count for Ridge, LASSO, MLP,
    XGBoost, TabICL, and TabPFN.}
  \label{fig:mb-gap-regressor}
\end{figure*}

\section{When the Boundary Helps}
\label{sec:mb-gap}

\noindent\emph{The Markov boundary is a useful prediction oracle. The
oracle gain is finite-sample and regressor-dependent. It grows when
redundant dimensions stress the downstream regressor, and it shrinks
when the regressor already performs feature selection.}

\subsection{Cross-regressor MB gap}
\label{sec:oracle-boundary}

We evaluate six regressors on \SCMBench{}. Ridge
and LASSO are shrinkage baselines
\citep{Hoerl01021970,10.1111/j.2517-6161.1996.tb02080.x}. MLP and
XGBoost cover nonlinear neural and tree-boosting regressors
\citep{HORNIK1989359,10.1145/2939672.2939785}. TabPFN and
TabICL represent prior-fitted tabular foundation regressors
\citep{hollmann2022tabpfn,qu2025tabicl,
qu2026tabiclv2}. Across these six regressors, the median
relative oracle RMSE reduction is strongly regressor-dependent.
The median reductions are Ridge $+35\%$, MLP $+24\%$, TabICL
$+18\%$, TabPFN $+12\%$, XGBoost $+4\%$, and LASSO $+2\%$.
The qualitative pattern is stable across the
$F$ sweep.
The gap is small below $F=200$ and in relatively dense graphs. It is
largest when the full feature set is high-dimensional and redundant.
\Cref{fig:mb-gap-regressor} reports this pattern across the six
regressors.

Implicit feature selection explains the small-gap end of the spectrum.
LASSO explicitly sparsifies. XGBoost selects split variables. Their
small oracle gaps therefore do not mean feature selection is
unimportant. They mean the feature-selection burden has already been
partly absorbed by the regressor. TabPFN and TabICL still show
nontrivial oracle gaps, including cases above a $10\%$ prediction-loss
change, so tabular foundation model regressors are not immune to
redundant features.

\subsection{MB gap attribution}
\label{sec:mixed-effect}

We summarize the cross-regressor effect with the final attribution
model on relative MB gap. Mixed-effect models are the standard
reference point for repeated or grouped observations
\citep{c0ae3670-c51a-3c43-9f8f-601a49c19723}. Here we report the fixed-effect
attribution form because the paper uses the model only to summarize
factor effects.
\begin{equation}
\begin{aligned}
\delta_{\mathrm{MB}}
  \sim\;&
  \texttt{redundancy\_ratio}
  + \log_{10}F
  + \texttt{scm\_family}
  + \texttt{regressor} \\
  &+ \texttt{regressor}:\texttt{redundancy\_ratio}
  + \texttt{regressor}:\log_{10}F.
\end{aligned}
\label{eq:mixed-effect}
\end{equation}
\Cref{fig:mixed-effect} visualizes the fitted marginal effects.

\begin{figure*}[t]
  \centering
  \includegraphics[
    width=.8\textwidth,
    keepaspectratio]{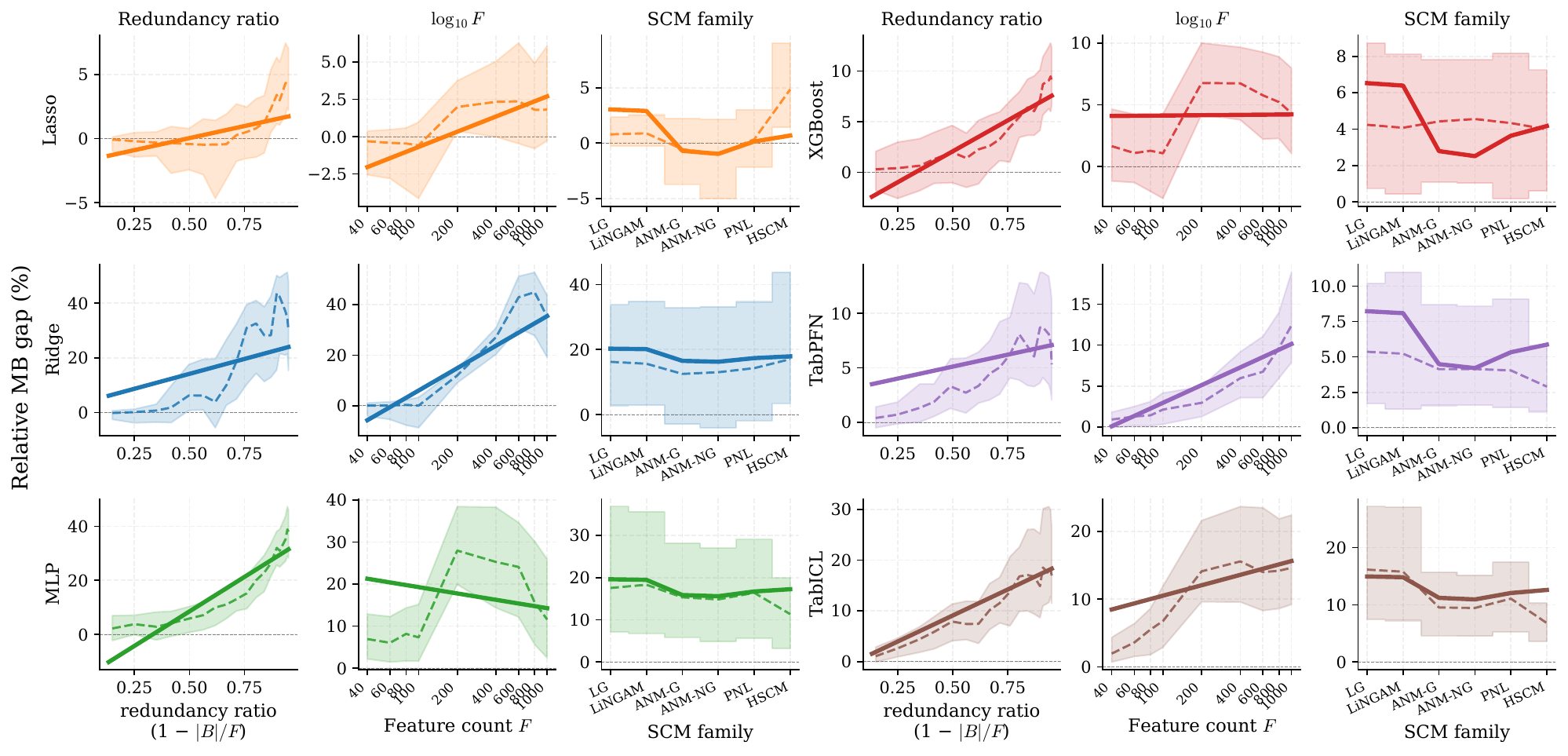}
  \caption{Regressor-conditioned attribution of the MB gap. The gap
    is explained primarily by redundancy, feature dimension, and
    regressor identity. SCM family contributes little after these
    factors are controlled.}
  \Description{Grid of marginal-effect plots from the final
    attribution model, with rows for regressors and columns for
    redundancy ratio, feature count, and SCM family.}
  \label{fig:mixed-effect}
\end{figure*}

The model confirms the visual pattern. Redundancy and regressor
identity are the strongest factors. Density is largely the same signal
as redundancy with the opposite sign. SCM family is weak after these
covariates are included. In univariate checks, redundancy ratio and
regressor identity are effectively tied for the largest adjusted
$R^2$ values ($0.221$ and $0.219$), followed by density ($0.187$) and
$\log_{10}F$ ($0.166$). SCM family explains much less ($0.012$).
This is the main empirical message of \Cref{sec:mb-gap}. The MB gap is
not a property of the graph alone, or of the regressor alone, but of
their interaction at a fixed sample budget.

\subsection{A finite-sample explanation}
\label{sec:good-theory}

The population theorem says that both $B$ and $[F]$ are sufficient. The
finite-sample question is how much variance the learner pays for using
unnecessary columns. A linear Gaussian SCM gives the simplest example
of this effect.

\begin{assumption}[Linear Gaussian working model]
\label{ass:finite}
For the purpose of this approximation only, suppose that the
conditional model is correctly specified and linear, the design is
Gaussian with $\X_S\sim\mathcal{N}(0,\Sigma_S)$ for a positive-definite
$\Sigma_S$ shared by the training and test distributions, the noise
variance is $\sigma^2$, and $n>p+1$ where $p=|S|$.
\end{assumption}

\begin{proposition}[Finite-sample MB gap]
\label{prop:finite-mb-gap}
Under \Cref{ass:finite}, for any Bayes-sufficient subset $S$ with
$p=|S|$,
\begin{equation}
\E\!\left[R(\widehat\beta_S)\right]
  = R^*(B)+\frac{\sigma^2 p}{n-p-1}.
\label{eq:finite-exact}
\end{equation}
In particular, the full-vs-boundary gap is
\begin{equation}
\frac{\sigma^2 F}{n-F-1}-\frac{\sigma^2 k}{n-k-1}
  = \frac{\sigma^2(F-k)}{n}+O\!\left(\frac{F^2+k^2}{n^2}\right).
\label{eq:finite-gap}
\end{equation}
\end{proposition}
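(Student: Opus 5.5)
The plan is the classical random-design OLS risk calculation, with one adjustment for the intercept. The result is stated in the form $\sigma^2 p/(n-p-1)$, and the ``$-1$'' only appears if the ordinary least squares fit $\widehat\beta_S$ carries an intercept, or equivalently is computed on centered data. I will assume that convention.

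\textbf{Step 1: reduce to a linear model on $S$.} Since $S$ is Bayes sufficient, \Cref{thm:mb-bayes} and \eqref{eq:risk-gap} give $\E[Y\mid\X_S]=\E[Y\mid\X_B]$ almost surely. Under \Cref{ass:finite} this means $Y=\X_S^\top\beta_S+\varepsilon$, where $\varepsilon$ is independent of $\X_S$, has mean zero and variance $\sigma^2$, and satisfies $R^*(S)=R^*(B)=\sigma^2$. For any estimator $\widehat\beta$ built from the training sample, write its risk on an independent test point as
\begin{equation*}
R(\widehat\beta)=\sigma^2+(\widehat\beta-\beta_S)^\top\Sigma_S(\widehat\beta-\beta_S).
\end{equation*}
This holds because the test noise is independent of the training data and of $\X_S$.

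\textbf{Step 2: average the excess risk.} Condition on the training design. The OLS error $\widehat\beta-\beta_S$ is then $\mathcal{N}(0,\sigma^2(\tilde X^\top\tilde X)^{-1})$, where $\tilde X$ is the centered $n\times p$ design. Taking the trace, the conditional excess risk is
\begin{equation*}
\sigma^2\operatorname{tr}\!\big(\Sigma_S(\tilde X^\top\tilde X)^{-1}\big).
\end{equation*}

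\textbf{Step 3: evaluate the inverse-Wishart mean.} This is the main step. Centering removes one degree of freedom, so $\tilde X^\top\tilde X\sim W_p(\Sigma_S,n-1)$. The inverse-Wishart mean then gives
\begin{equation*}
\E[(\tilde X^\top\tilde X)^{-1}]=\Sigma_S^{-1}/(n-1-p-1),
\end{equation*}
which is finite when $n-1>p+1$. The obstacle is the off-by-one bookkeeping. Without an intercept the design matrix is $W_p(\Sigma_S,n)$ and yields exactly $p/(n-p-1)$. With an intercept it yields $p/(n-p-2)$. So the stated formula \eqref{eq:finite-exact} matches the no-intercept, mean-zero OLS, which is consistent with the zero-mean Gaussian design of \Cref{ass:finite}. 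I will adopt that reading, under which $n>p+1$ is exactly the finiteness condition. Taking the trace gives $\sigma^2 p/(n-p-1)$, and adding $R^*(B)=\sigma^2$ yields \eqref{eq:finite-exact}. This formula does not depend on $\Sigma_S$, which is why it holds for any Bayes-sufficient $S$.

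\textbf{Step 4: the gap.} Apply \eqref{eq:finite-exact} with $S=[F]$, which is allowed by \Cref{cor:all-sufficient}, and with $S=B$, then subtract. For the expansion, write
\begin{equation*}
\frac{p}{n-p-1}=\frac{p}{n}\Big(1-\frac{p+1}{n}\Big)^{-1}=\frac{p}{n}+O\!\Big(\frac{p(p+1)}{n^2}\Big).
\end{equation*}
Applying this for $p=F$ and $p=k$ and taking the difference gives
\begin{equation*}
\frac{\sigma^2(F-k)}{n}+O\!\Big(\frac{F^2+k^2}{n^2}\Big).
\end{equation*}
This requires the regime $F/n$ bounded away from $1$, so that the geometric remainder stays controlled. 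The $O(p/n^2)$ terms are absorbed into $O(p^2/n^2)$ since $p\ge1$.
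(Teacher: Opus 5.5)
Your proposal is correct and follows the paper's route. Unbiasedness of OLS reduces the excess risk to $\sigma^2\operatorname{tr}(\Sigma_S (X^\top X)^{-1})$, and the inverse-Wishart mean $\E[(X^\top X)^{-1}]=\Sigma_S^{-1}/(n-p-1)$ for $X^\top X\sim W_p(\Sigma_S,n)$ then gives $\sigma^2 p/(n-p-1)$. Subtracting the $S=[F]$ and $S=B$ cases and expanding in $1/n$ yields the gap. Your explicit remainder bound and the caveat that $F/n$ stay bounded away from one make precise what the paper leaves implicit.

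One cleanup: your opening claim that the ``$-1$'' comes from an intercept is backwards, as your own Step 3 discovers. The $-1$ is intrinsic to the uncentered Wishart with $n$ degrees of freedom, while centering would give $n-p-2$. State the mean-zero, no-intercept convention from the outset and use the uncentered design in Step 2.
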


\begin{proof}
For any Bayes-sufficient subset $S$, the linear conditional mean is
correctly specified in this model. The OLS estimator
$\widehat\beta_S$ is unbiased, so the excess test risk above $R^*(B)$
is pure estimation variance. Under Gaussian errors with a fixed
positive-definite design covariance, the prediction variance of OLS
on $p$ regressors with $n$ observations follows the Wishart
distribution, giving $\E[R(\widehat\beta_S)]=R^*(B)+\sigma^2
p/(n-p-1)$ \citep{hastie2009elements}. Both $B$ and $[F]$ are
sufficient, but they fit $k$ and $F$ coefficients respectively.
Subtracting gives the exact gap. A Taylor expansion in $1/n$ yields
the leading term $\sigma^2(F-k)/n$.
\end{proof}

The gap is the variance cost of estimating $F-k$ extra coefficients
whose population contribution is zero once $B$ is observed. This
expression is exact for OLS under the Gaussian model, but it is not
meant to cover every regressor. When $p$ approaches or exceeds $n$,
OLS is no longer the right working model. Regularized and nonlinear
regressors replace raw parameter count with effective complexity, and
their finite-sample cost for redundant features depends on the
specific inductive bias. That is exactly what \Cref{eq:mixed-effect}
captures through regressor-specific slopes.

\section{The Emperor's New Blanket}
\label{sec:bad}

\noindent\emph{Oracle boundaries can improve prediction, but current
causal estimators rarely turn that advantage into wins.}

The natural recipe is simple. Estimate the Markov boundary, select the feature set according to the estimated boundary, and train the same downstream
regressor. We run the experiments on Sol \cite{sol} instances with three causal
estimators. GES is a score-search method
\citep{10.1162/153244303321897717}. Grow-Shrink is a local boundary
estimator \citep{NIPS1999_5d79099f}. HITON-MB follows
the same local discovery tradition \citep{aliferis2003hiton}.

\begin{figure}[htp]
  \centering
  \includegraphics[width=0.9\linewidth]{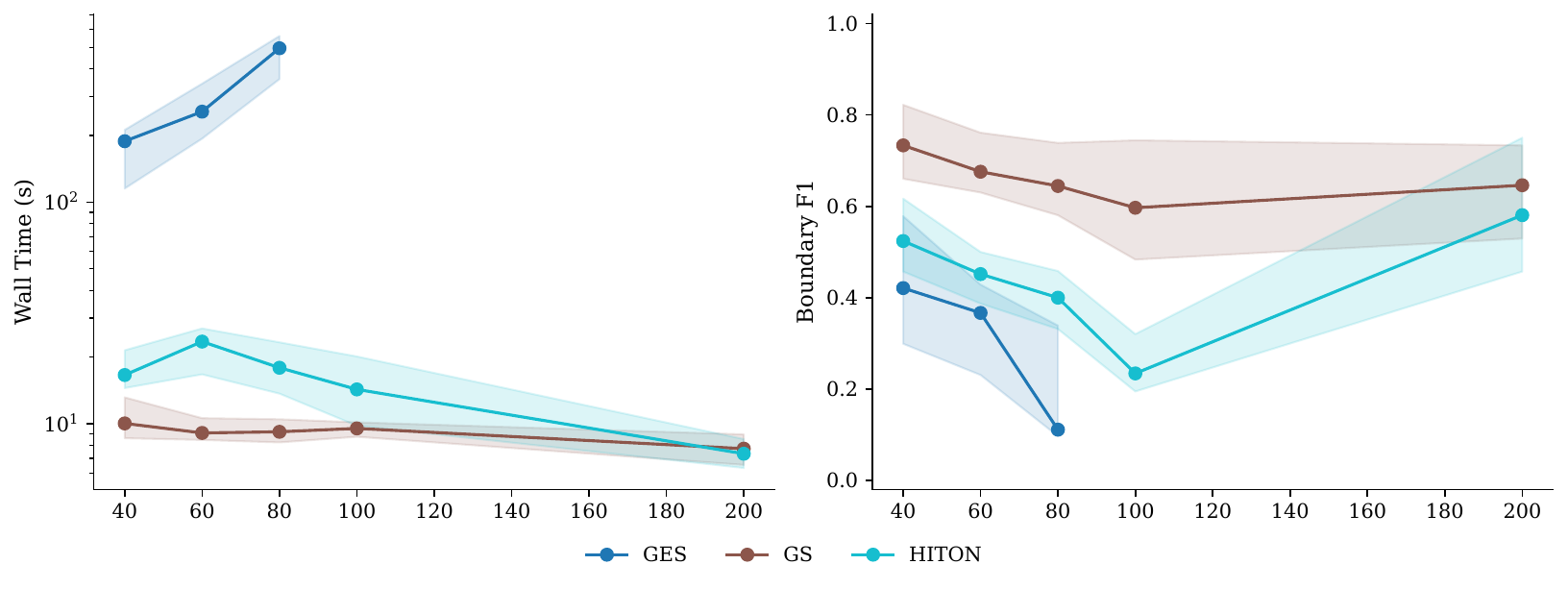}
  \caption{Causal boundary estimators do not scale well when the
    oracle MB gap is largest.}
  \Description{Two-panel plot showing wall time on a logarithmic scale
    and boundary F1 versus feature count for GES, Grow-Shrink, and
    HITON-MB.}
  \label{fig:runtime-vs-f}
\end{figure}

\begin{table*}[t]
  \centering
  \caption{Causal Markov-boundary estimators are accurate only in the
    low- to mid-dimensional regime and do not reliably improve
    prediction.}
  \label{tab:causal-scale}
  \normalsize
  \setlength{\tabcolsep}{6.5pt}
  \renewcommand{\arraystretch}{1.05}
  \begin{tabular}{@{}llcccccccc@{}}
    \toprule
    $F$ & Method
      & F1 & Prec. & Recall
      & Win/Ridge & Win/TabPFN & Win/XGB
      & Time (s) & Completion \\
    \midrule
    \multirow{3}{*}{$40$}
      & GES         & 0.438 & 0.826 & 0.330 & 6.7\%  & 1.1\%  & 13.6\% & 162.1 & 98.9\% \\
      & Grow-Shrink & 0.737 & 0.844 & 0.691 & 48.6\% & 20.7\% & 48.6\% & 10.9  & 99.4\% \\
      & HITON-MB    & 0.523 & 0.764 & 0.432 & 10.5\% & 5.0\%  & 25.3\% & 18.0  & 100.0\% \\
    \midrule
    \multirow{3}{*}{$60$}
      & GES         & 0.339 & 0.875 & 0.228 & 4.3\%  & 0.0\%  & 9.2\%  & 278.7 & 76.7\% \\
      & Grow-Shrink & 0.688 & 0.861 & 0.606 & 55.3\% & 16.2\% & 46.4\% & 9.5   & 99.4\% \\
      & HITON-MB    & 0.451 & 0.791 & 0.343 & 6.9\%  & 0.0\%  & 19.8\% & 22.1  & 98.3\% \\
    \midrule
    \multirow{3}{*}{$80$}
      & GES         & 0.214 & 0.824 & 0.138 & 0.0\%  & 0.0\%  & 0.0\%  & 475.1 & 9.4\% \\
      & Grow-Shrink & 0.652 & 0.820 & 0.585 & 58.7\% & 22.2\% & 51.7\% & 9.3   & 100.0\% \\
      & HITON-MB    & 0.392 & 0.734 & 0.287 & 5.2\%  & 0.0\%  & 28.6\% & 19.0  & 97.8\% \\
    \midrule
    \multirow{3}{*}{$100$}
      & GES         & $-$   & $-$   & $-$   & $-$    & $-$    & $-$    & $-$   & - \\
      & Grow-Shrink & 0.623 & 0.711 & 0.610 & 35.4\% & 39.1\% & 43.8\% & 9.4   & 99.4\% \\
      & HITON-MB    & 0.253 & 0.576 & 0.180 & 1.7\%  & 0.0\%  & 8.3\%  & 16.2  & 100.0\% \\
    \midrule
    \multirow{3}{*}{$200$}
      & GES         & $-$   & $-$   & $-$   & $-$    & $-$    & $-$    & $-$   & - \\
      & Grow-Shrink & 0.633 & 0.570 & 0.752 & 98.9\% & 37.4\% & 77.9\% & 8.1   & 100.0\% \\
      & HITON-MB    & 0.588 & 0.818 & 0.485 & 58.0\% & 1.1\%  & 55.8\% & 8.0   & 98.9\% \\
    \bottomrule
  \end{tabular}
\end{table*}

The recovery results are not enough to support the oracle
story.
Grow-Shrink is the most reliable of the three up to $F=200$. HITON-MB
has high precision but low recall. GES has high precision but
very
low recall and becomes unusable past $F=80$ under the \SCMBench{}
budget. GES completion falls to $9.4\%$ at $F=80$ and $0\%$ at
$F=100$. Grow-Shrink reaches $F=200$ with mean F1 $0.633$, while
HITON-MB reaches $F=200$ with mean F1 $0.588$ but recall only
$0.485$. These are precisely the feature sizes below or near the
region where the oracle MB gap is still limited.
\Cref{fig:runtime-vs-f,tab:causal-scale} show the resulting scale
mismatch.

The same table also shows a precision-oriented failure mode. Causal
recovery methods, especially local constraint-based methods, often
prefer conservative blankets with higher precision than recall.
HITON-MB has higher precision than recall at every evaluated feature
count, and Grow-Shrink is precision-heavy through $F=100$ before
flipping at $F=200$. This matters for prediction because the next
section shows that false negatives are usually more costly than false
positives.

The downstream prediction outcome also depends on the
regressor. Ridge
benefits most from masks because it has little built-in feature
selection. XGBoost benefits less because split selection already
filters many irrelevant variables. TabPFN often loses under estimated
masks, despite its nonzero oracle MB gap, indicating that noisy masks
can remove useful context even when the true boundary would help. For
example, Grow-Shrink reaches $98.9\%$ wins for
Ridge at $F=200$, but only $37.4\%$ for TabPFN at the same split.
HITON-MB wins at most $5.0\%$ of TabPFN tasks.
The paired recovery and win-rate columns in \Cref{tab:causal-scale}
show this failure directly.
This section stops at the empirical failure. The next section explains
why the failure is structural.

\section{Failure Mechanisms}
\label{sec:ugly}

\subsection{Scalability}
\label{sec:ugly-scale}

\noindent\emph{Causal estimators struggle to reach the regime where
the oracle MB gap is largest.}

The scale problem is not merely implementation overhead. Local
constraint-based methods perform conditional-independence tests over
candidate neighborhoods. In the worst case, conditioning sets up to
size $d$ induce $O(F^d)$ candidate tests. This is the same
combinatorial pressure that appears throughout constraint-based
structure learning
\citep{tsamardinos2003algorithms,aliferis2010local,tsamardinos2006max,
yu2020causality}.
Score-search methods avoid the same test enumeration but still search
over a rapidly growing graph space.

Empirically, this complexity meets \SCMBench{} exactly where it
hurts. GES is effectively limited to $F\le80$ under the run
budget,
while Grow-Shrink and HITON-MB are capped around $F=200$. \Cref{sec:mb-gap}
showed that the oracle gap is small below $F=200$ and grows
in higher-dimensional redundant settings. The available causal
estimators therefore operate mostly where the prediction reward is
weakest.

\subsection{Asymmetric loss}
\label{sec:ugly-asym}

\noindent\emph{Exact-boundary metrics treat false positives and false
negatives as identification errors, but prediction risk weights them
very differently.}

We perturb the oracle boundary directly to isolate the two error
types. A false negative removes a true boundary feature. A false
positive keeps the boundary intact but adds a non-boundary feature.
Across regressors, the model-free per-feature cost ratio
$\alpha_{\mathrm{FN}}/\alpha_{\mathrm{FP}}$ is greater than one in
every reported cell. The magnitude is regressor-dependent. LASSO
ratios are large because the FP cost is nearly zero. Ridge ratios
shrink with $F$ as extra columns become costly. TabPFN gives the
canonical asymmetric pattern. It has large FN cost and moderate FP cost.
\Cref{fig:fn-fp} summarizes the perturbation experiment.

\begin{figure*}[htp]
  \centering
  \includegraphics[width=.75\textwidth]{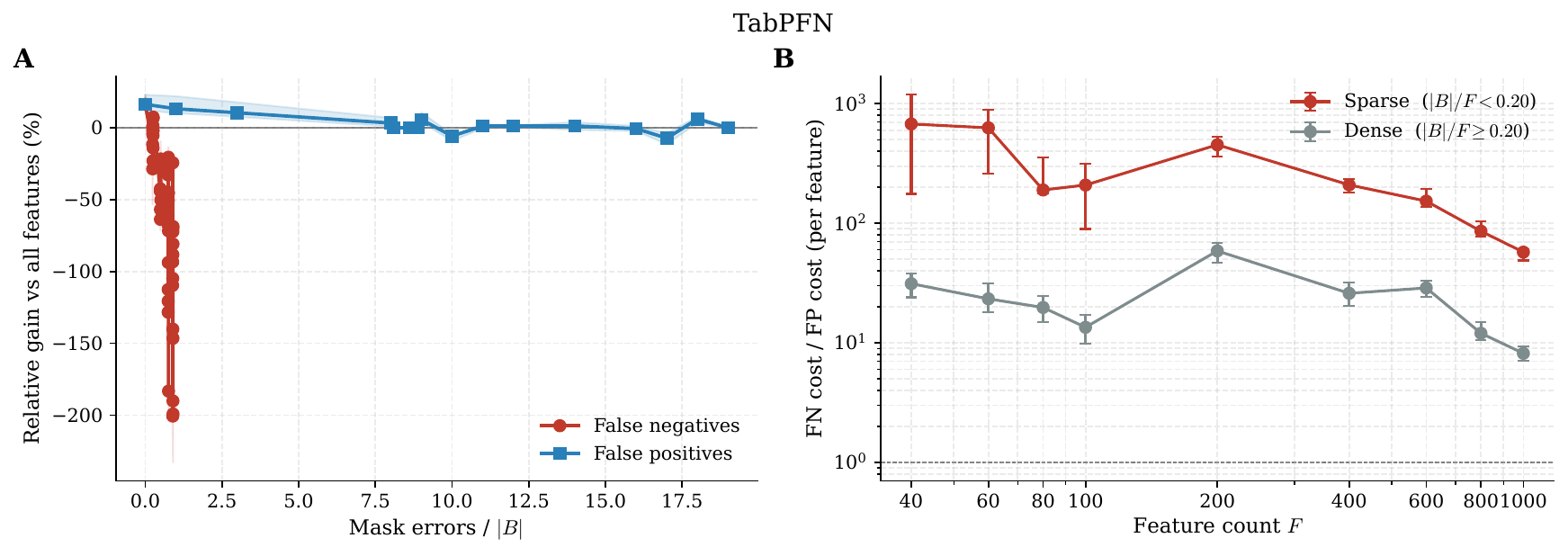}
  \caption{False negatives and false positives have different
    downstream costs. The TabPFN panel shown here follows the same
    qualitative asymmetry as the other regressors.}
  \Description{Two-panel figure showing false-negative and
    false-positive perturbation costs and per-feature cost ratios.}
  \label{fig:fn-fp}
\end{figure*}

The asymmetry is the omitted-variable-bias story in predictive form.
Let $\widehat S$ be an estimated mask,
$\Delta^+=\widehat S\setminus B$, and
$\Delta^-=B\setminus\widehat S$.

\begin{proposition}[Mask-error decomposition]
\label{prop:mask-error}
In the linear Gaussian example of \Cref{ass:finite}, adding
$s^+=|\Delta^+|$ redundant variables to a mask that contains $B$
preserves the population predictor. Its leading cost is an additional
variance term of order $\sigma^2 s^+/n$. Omitting boundary variables
introduces a population-risk term that does not vanish as $n$ grows.
Write $M=\widehat S$ and $O=B\setminus M$; then
\begin{equation}
R^*(M)=R^*(B)+
  \E\!\left[\left(
    \X_O^\top\beta_O
    -\E[\X_O^\top\beta_O\mid \X_M]
  \right)^{\!2}\right].
\label{eq:ovb}
\end{equation}
\end{proposition}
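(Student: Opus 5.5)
The proof has three parts: redundant additions, the variance cost, and omitted boundary variables.

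\emph{Redundant additions.} Take $M\supseteq B$. Then $\E[Y\mid\X_M]=\E[Y\mid\X_B]$, by the separator property used in the proof of \Cref{thm:mb-bayes}. So $M$ is Bayes sufficient and the population predictor is preserved.

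\emph{Variance cost of redundant additions.} Apply \Cref{prop:finite-mb-gap} with $p=k+s^+$. The expected risk exceeds $R^*(B)$ by $\sigma^2(k+s^+)/(n-k-s^+-1)$. Compared with $S=B$, the same Taylor expansion in $1/n$ used for \eqref{eq:finite-gap} gives an extra cost $\sigma^2 s^+/n+O((k+s^+)^2/n^2)$. This is the claimed variance term of order $\sigma^2 s^+/n$.

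\emph{Omitted boundary variables.} Now let $M$ be arbitrary and set $O=B\setminus M$. I would start from the risk decomposition \eqref{eq:risk-gap} with $S=M$:
\[
R^*(M)=R^*(B)+\E\bigl[(\E[Y\mid\X_B]-\E[Y\mid\X_M])^2\bigr].
\]
In the linear working model, $\E[Y\mid\X_B]=\X_B^\top\beta_B=\X_{B\cap M}^\top\beta_{B\cap M}+\X_O^\top\beta_O$.

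Next, compute $\E[Y\mid\X_M]$ by the tower property. Since $\X_{B\cup M}$ contains $\X_B$, we have $\E[Y\mid\X_{B\cup M}]=\E[Y\mid\X_B]$. Conditioning this on $\X_M$ gives
\[
\E[Y\mid\X_M]=\X_{B\cap M}^\top\beta_{B\cap M}+\E[\X_O^\top\beta_O\mid\X_M],
\]
because $\X_{B\cap M}$ is $\X_M$-measurable. Subtracting the two conditional means leaves exactly $\X_O^\top\beta_O-\E[\X_O^\top\beta_O\mid\X_M]$, which gives \eqref{eq:ovb}.

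Under joint Gaussianity this residual is the linear-regression residual of $\X_O^\top\beta_O$ on $\X_M$. Its expected square equals $\beta_O^\top\Sigma_{O\mid M}\beta_O$, which does not depend on $n$.

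The step I expect to need the most care is showing that this term is strictly positive, so that it genuinely does not vanish. By \Cref{ass:graphical}, each $j\in O$ moves the conditional mean, so $\beta_O\neq0$. Positive-definiteness of the joint covariance of $\X_{M\cup O}$ makes the Schur complement $\Sigma_{O\mid M}$ positive definite. Together these give $\beta_O^\top\Sigma_{O\mid M}\beta_O>0$ whenever $O\neq\emptyset$. The finite-sample estimation error on $M$ only adds a nonnegative variance term on top, so the bias persists as $n\to\infty$.
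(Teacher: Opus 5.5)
Your proposal is correct and follows the paper's proof. The superset case uses the separator property. For the omitted-variable case, you write the linear conditional mean as $\X_{B\cap M}^\top\beta_{B\cap M}+\X_O^\top\beta_O$ and project onto $\X_M$, which matches the paper's step $Y=\X_M^\top\beta_M+\X_O^\top\beta_O+\varepsilon$. You are more explicit than the paper, which only asserts the $\sigma^2 s^+/n$ cost and cites textbooks for non-vanishing; you instead derive the cost from \Cref{prop:finite-mb-gap}, obtain \Cref{eq:ovb} by substituting into \Cref{eq:risk-gap}, and prove strict positivity from $\beta_O\neq 0$ and the positive-definite Schur complement $\Sigma_{O\mid M}$.
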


\begin{proof}
If $\widehat S\supseteq B$, the Markov-boundary separator property
gives $Y\indep \X_{[F]\setminus \widehat S}\mid \X_{\widehat S}$.
Consequently,
$\E[Y\mid \X_{\widehat S}]=\E[Y\mid \X_B]$ in population. The extra
coordinates in $\Delta^+$ have zero population contribution once $B$
is conditioned on. They can still increase finite-sample error because
their coefficients must be estimated, giving the leading
$\sigma^2 s^+/n$ variance cost under the linear Gaussian example.

Now suppose $\widehat S$ omits at least one boundary variable. Write
$M=\widehat S$ and $O=B\setminus M$. In the linear Gaussian example,
the target can be written as
\[
Y=\X_M^\top\beta_M+\X_O^\top\beta_O+\varepsilon,
\qquad
\E[\varepsilon\mid \X_M,\X_O]=0.
\]
The best predictor from $\X_M$ alone is
$\E[Y\mid\X_M]=\X_M^\top\beta_M+\E[\X_O^\top\beta_O\mid\X_M]$.
The residual $\X_O^\top\beta_O-\E[\X_O^\top\beta_O\mid\X_M]$ is
the component of the omitted signal that $\X_M$ cannot recover, and
its expected squared magnitude is the population-risk penalty in
\Cref{eq:ovb}. This term does not vanish with more samples
\citep{hastie2009elements,0f8094d1-2ebf-3360-90f1-98e50c172f31}.
\end{proof}

Thus a high-precision, low-recall boundary estimator can look
reasonable under exact-recovery metrics while still damaging
prediction. This explains why F1, SHD, or constraint satisfaction are
incomplete objectives for tabular prediction.

\subsection{Minimality}
\label{sec:ugly-family}

\noindent\emph{The exact Markov boundary is the minimal sufficient
set, not the only prediction-relevant set.}

Theoretical optimality singles out $B$ because it is minimal. For
prediction, however, minimality is only one part of the story. Any
controlled superset of $B$ preserves the population conditional and
may be preferable to a brittle estimate that misses boundary
variables.

\begin{proposition}[Superset sufficiency]
\label{prop:superset}
If $S\supseteq B$, then $S$ is a Markov blanket of $Y$ and
$R^*(S)=R^*(B)$.
\end{proposition}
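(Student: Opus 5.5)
The plan has two parts: first establish the blanket property for $S$ from the separator property of $B$, then pass from the conditional independence to equality of risks using the tower property and \Cref{thm:mb-bayes}.

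\textbf{Step 1 (blanket property).} Under \Cref{ass:graphical}, the separator property used in the proof of \Cref{thm:mb-bayes} gives
\[
Y\indep \X_{[F]\setminus B}\mid \X_B .
\]
Write $\X_{[F]\setminus B}=(\X_{S\setminus B},\X_{[F]\setminus S})$, which is valid because $S\supseteq B$. The weak union property of conditional independence moves $\X_{S\setminus B}$ into the conditioning set. This yields $Y\indep \X_{[F]\setminus S}\mid \X_S$, which is exactly the statement that $S$ is a Markov blanket of $Y$. No faithfulness is needed here, only the Markov property; weak union holds for any joint law.

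\textbf{Step 2 (conditional means agree).} From the blanket property, $\E[Y\mid \X]=\E[Y\mid \X_S]$. The same argument applied to $B$ gives $\E[Y\mid\X]=\E[Y\mid\X_B]$. Alternatively, one can use the tower property,
\[
\E[Y\mid\X_S]=\E\bigl[\E[Y\mid\X]\mid\X_S\bigr]=\E\bigl[\E[Y\mid\X_B]\mid \X_S\bigr]=\E[Y\mid\X_B],
\]
where the last equality holds because $\X_B$ is a function of $\X_S$.

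\textbf{Step 3 (risks agree).} Substituting into \Cref{eq:risk-gap}, the correction term vanishes, so $R^*(S)=R^*(B)$. By \Cref{thm:mb-bayes}, this common value also equals $R^*([F])$.

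The only point needing care is the weak union step in Step 1, and specifically its reliance on the separator property holding for the full complement $[F]\setminus B$. That property is supplied by the Markov assumption together with the graphical definition in \Cref{eq:boundary}. Everything after Step 1 is routine.
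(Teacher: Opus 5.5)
Your proposal is correct and follows the paper's proof essentially step for step: the separator property $Y\indep \X_{[F]\setminus B}\mid \X_B$, the split of the complement into $(\X_{S\setminus B},\X_{[F]\setminus S})$, weak union to obtain the blanket property, and then equality of conditional means giving $R^*(S)=R^*(B)$. Your tower-property argument and the explicit substitution into \Cref{eq:risk-gap} only spell out the step the paper says ``follows directly.''
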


\begin{proof}
Write $C=S\setminus B$ and $T=[F]\setminus S$. The graphical
Markov-boundary property gives $Y\indep (X_C,X_T)\mid X_B$. By weak
union, $Y\indep X_T\mid X_B,X_C$, so $S$ is a Markov blanket and
$\E[Y\mid\X_S]=\E[Y\mid\X_B]$. The risk equality
$R^*(S)=R^*(B)$ follows directly.
\end{proof}

This proposition does not claim that larger masks beat the oracle
boundary. It says that exact recovery is too narrow as a training or
evaluation target. Over-inclusive masks may lose finite-sample
efficiency, but they can still be much safer than masks that drop
boundary variables. This motivates the prediction-aligned view in the
next section.

\section{Beyond the Boundary}
\label{sec:beyond}

The previous sections explain why exact MB recovery is a poor proxy
for prediction. This raises the question the rest of the paper turns
on. If the exact boundary is not the right target, what makes a
feature set \emph{good} and \emph{useful} for prediction? We answer it
constructively with two computation-only devices. The first is layered
blankets. The second is a prediction gain map over mask precision and
recall.
We then state the prediction-aligned object directly.

\subsection{Alternative blankets}
\label{sec:layered}

Let $B(v)$ denote the Markov boundary of node $v$ in the full DAG.
Define accumulated layers
\begin{equation}
L_{\le 1}=B(Y),\qquad
L_{\le k+1}
  =
  \left(L_{\le k}\cup\bigcup_{v\in L_{\le k}}B(v)\right)\setminus\{Y\}.
\label{eq:layered}
\end{equation}
The associated blanket rank $\kappa(v)$ is the first $k$ such that
$v\in L_{\le k}$, with disconnected nodes assigned an infinite rank.
The comparison baseline is target proximity, the shortest-path
distance from $Y$ in the undirected skeleton.
\Cref{fig:dag-layered-vs-proximity} contrasts these two expansions.
Layered blankets follow Markov-boundary closure, while target
proximity follows graph distance from $Y$. The two already differ at
rank one, and the difference is exactly the Markov property.
Layered@1 is the Markov boundary $B(Y)$ itself. Proximity@1 collects
only the immediate skeleton neighbors of $Y$, namely its parents and
children. A spouse that is not itself a parent or child of $Y$ lies
two hops away in the skeleton, reachable only through a shared child,
so proximity@1 omits it and is not a Markov blanket. The comparison
that follows isolates the predictive consequence of dropping those
spouses, that is, of breaking the Markov property by one step.

\begin{figure*}[t]
  \centering
  \includegraphics[width=0.7\textwidth]{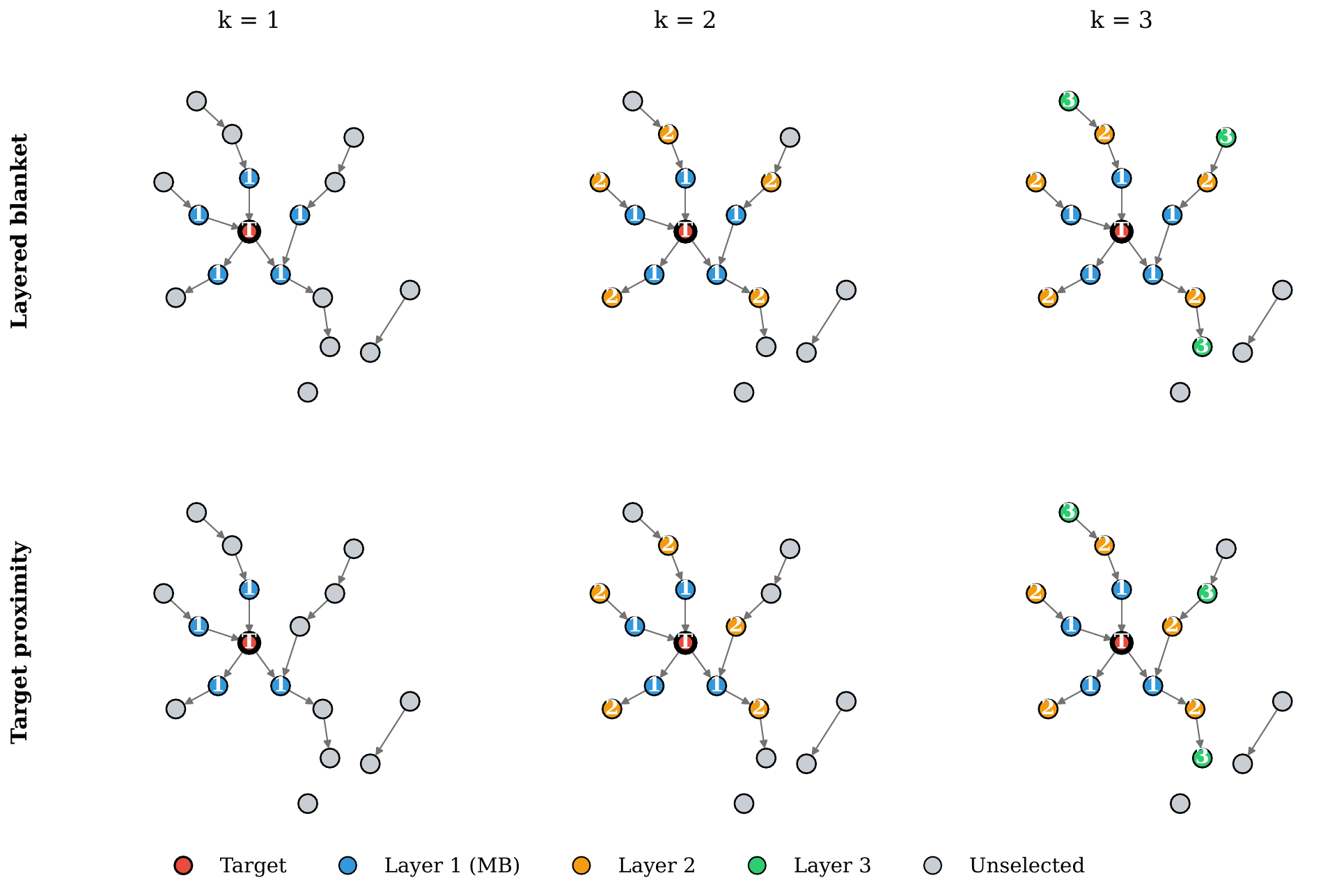}
  \caption{Comparison of two alternative blankets.
    The top row expands by accumulated Markov-boundary layers, while
    the bottom row expands by shortest-path distance from the target.}
  \Description{A two-by-three DAG schematic comparing layered
    blanket masks and target-proximity masks at ranks one, two, and
    three, with nodes colored by Markov-boundary layer.}
  \label{fig:dag-layered-vs-proximity}
\end{figure*}

Every accumulated layer contains $B(Y)$ and is therefore a Markov
blanket of $Y$ by \Cref{prop:superset}; the construction is a
computational device, not a new structural result.

Empirically, $L_{\le1}=B(Y)$ is the peak for every $F$, as expected.
It is the oracle boundary. Later layers dilute the oracle. The value
of the layered construction is not that larger layers outperform
$B$, but that they expose structured over-inclusive masks and separate
boundary-aware expansion from plain graph distance. For TabPFN,
layered@1 is substantially better than target-proximity@1 for
$F=200$--$800$. The margin ranges from $+10.8$ to $+20.9$ percentage
points over this high-dimensional band. Ridge shows the same
boundary-aware advantage at every $F$ split, with margins from
$+9.9$ to $+17.0$ percentage points. Because layered@1 and proximity@1
differ in that proximity can omit the non-adjacent spouses of $Y$,
this margin isolates the empirical predictive consequence of excluding
them from the mask. A mask that breaks the Markov property by one
structural step gives up a large, regressor-independent amount of
accuracy. Preserving the Markov property means including spouses.
That is what makes a feature set useful, not mere proximity.
\Cref{fig:layered-curve-abs} reports the layered comparison in
absolute RMSE for TabPFN.

\begin{figure}[t]
  \centering
  \includegraphics[width=.8\columnwidth]{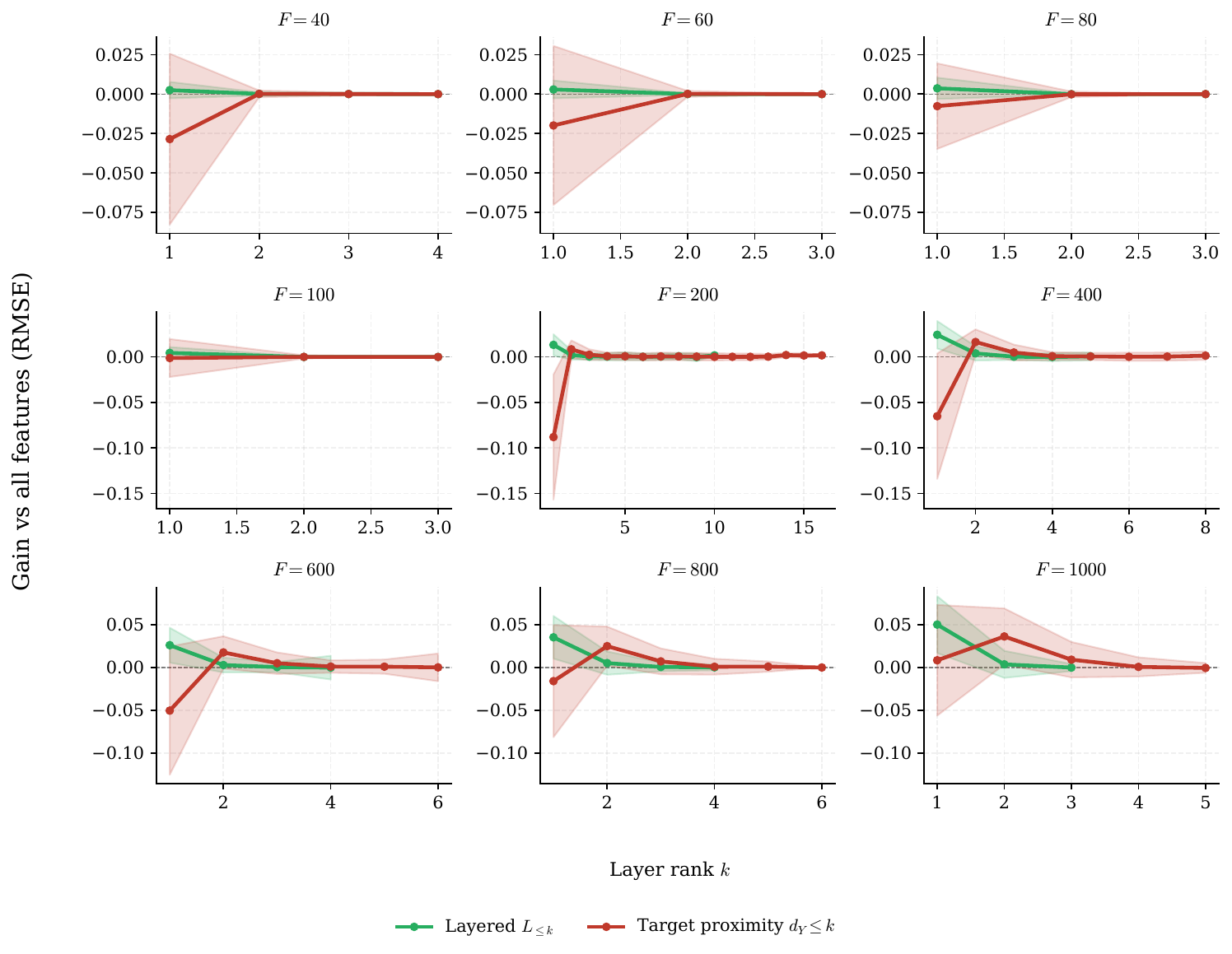}
  \caption{Effect of two alternative blankets on predictions}
  \Description{Layered-blanket curve for TabPFN in absolute RMSE,
    comparing accumulated Markov-boundary layers against
    target-proximity masks.}
  \label{fig:layered-curve-abs}
\end{figure}

\subsection{Prediction gain maps}
\label{sec:reward-map}

\Cref{sec:layered} established that the exact boundary is not the only
feature set worth having. Layered blankets are strict supersets
of $B(Y)$, and they also beat the full feature set. That turns a yes/no
question into a quantitative one. Across the whole space of candidate
masks, which ones are good, and what property makes them good? We go
one step beyond \Cref{sec:layered} and characterize good masks
directly, rather than enumerate them.

To keep the characterization clean we hold two things fixed. We
condition on a single feature count $F$ and a single downstream
regressor, so that the only quantity varying is the \emph{composition}
of the mask itself. Each panel of \Cref{fig:prediction-gain-map} simply repeats
the analysis for one regressor. A mask is then described by how it
differs from the oracle boundary. Its true positives (boundary
features kept), false negatives (boundary features dropped), and false
positives (non-boundary features added). On controlled perturbations
of $B$ we fit a simple model of prediction gain,
\begin{equation}
\operatorname{prediction\_gain}
  \sim
  \texttt{tp\_count}
  + \texttt{fn\_count}
  + \texttt{fp\_count}/n,
\label{eq:prediction-gain-model}
\end{equation}
the three coefficients answer the question directly. A good blanket is
one whose true-positive, false-negative, and false-positive counts
place it on the winning side of the fitted model. The map is a
descriptive, regressor-conditioned approximation, not a mask-risk law:
counts discard feature identity, so two masks with the same
$(\texttt{tp},\texttt{fn},\texttt{fp})$ can carry very different risk.

To read the model in the more familiar precision--recall coordinates,
note that at a precision--recall pair $(\pi,r)$ the implied counts are
\begin{align}
\texttt{fn} &= (1-r)|B|,\\
\texttt{fp} &= r|B|(1/\pi-1).
\end{align}
Substituting these into \Cref{eq:prediction-gain-model} turns it into a map
over $(\pi,r)$. Masks above the zero contour are predicted to beat all
features, and masks below it are not. \Cref{fig:prediction-gain-map} draws this
\emph{prediction-useful region}. It is a concrete description of what
a good blanket looks like for a given regressor.

The two overlaid trajectories place the mask families from
\Cref{sec:layered} on this plane. The blue layered path starts at the
top-right oracle point. It stays on the recall-one edge because every
layered mask contains $B(Y)$, so increasing $k$ only adds variables
outside the boundary. The brown proximity path starts from direct
skeleton neighbors of $Y$. That first mask excludes spouses, so recall
is low and the predicted gain is poor. Larger radii repair the false
negatives, but they also add many false positives. This mirrors the
actual layered-curve results. Layered@1 is the oracle peak, while
proximity@1 underperforms in the same regimes. The prediction gain map
is therefore a useful local approximation of the empirical mask-family
landscape, not only a fitted contour.

\begin{figure*}[t]
  \centering
  \begin{subfigure}[t]{0.4\textwidth}
    \centering
    \includegraphics[width=\linewidth]{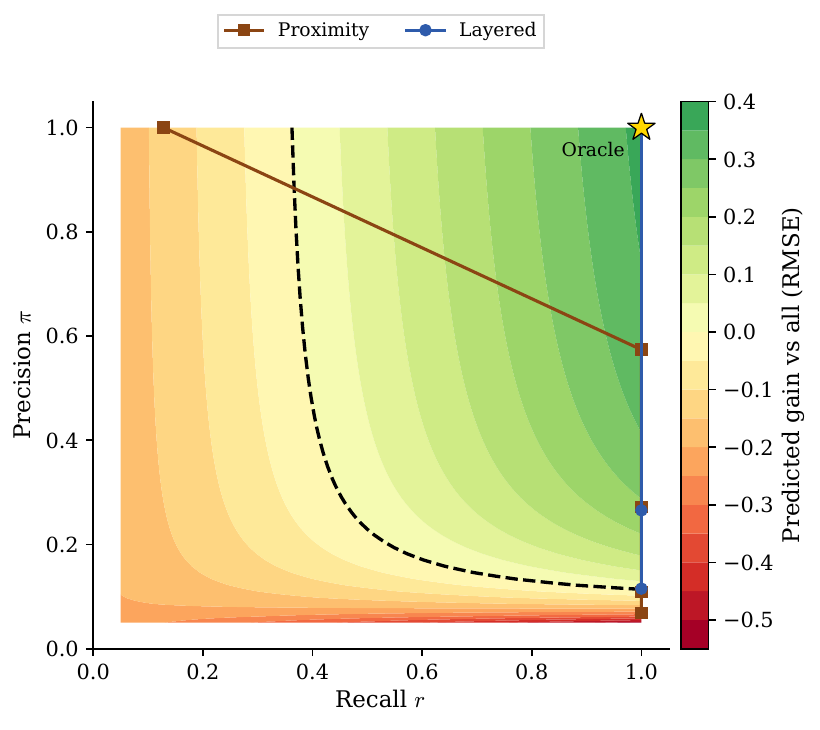}
    \caption{Ridge}
  \end{subfigure}
  \hfill
  \begin{subfigure}[t]{0.4\textwidth}
    \centering
    \includegraphics[width=\linewidth]{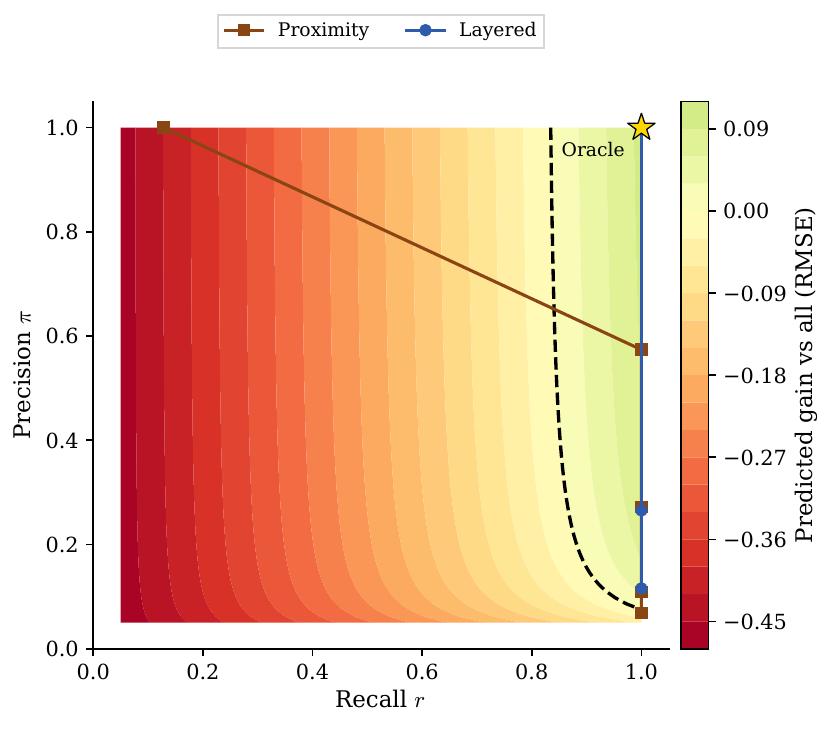}
    \caption{TabPFN}
  \end{subfigure}
  \caption{The dashed line marks the boundary between masks that improve prediction performance and those that hurt it.}
  \Description{Two prediction gain maps over precision and recall, one for
    Ridge and one for TabPFN. The zero contour marks the transition
    between masks predicted to help and masks predicted to hurt
    relative to all features. Brown trajectories show target-proximity
    masks, while blue trajectories show layered-blanket masks.}
  \label{fig:prediction-gain-map}
\end{figure*}

The region is wide, and it is not the same for every regressor. The
fitted TabPFN map has $R^2=0.758$ with a strong negative coefficient
for missed boundary features. The Ridge map has lower $R^2=0.436$ and
a much larger false-positive penalty, reflecting Ridge's sensitivity
to redundant columns. The prediction gain map also makes the
false-positive and false-negative tradeoff visible. In the TabPFN
panel, \Cref{fig:prediction-gain-map}(b), a mask must reach recall
above roughly $0.8$
before it is predicted to beat the all-feature baseline. This is a hard
training regime. Without an explicit penalty for false positives or
mask size, a learner can collapse to selecting nearly all variables.
The shape of the region is precisely why exact F1 against $B$ is not
the right objective. F1 scores every error alike, whereas the
prediction gain map distinguishes errors by their predictive cost. It also shows that
the downstream regressor decides whether a blanket helps.
This is the bridge from diagnosis to the research directions of
\Cref{sec:implications}.

\subsection{Beyond minimality}
\label{sec:beyond-min}

Layered blankets and the prediction gain map converge on the same
revision of the target. \Cref{thm:mb-bayes} is a population statement:
$B(Y)$ is the unique smallest feature set that reproduces
$\E[Y\mid\X]$ exactly. Prediction at finite $n$ asks a different
question. By \Cref{eq:risk-gap} a mask pays for the boundary
information it misses, but it is repaid in estimation variance for
every column it drops, so a mask that is \emph{not} Bayes sufficient
can still beat the full feature set. That trade is what
\Cref{sec:reward-map} measures, and \Cref{prop:superset} shows it is
safe in one direction: any superset of $B(Y)$ leaves the population
conditional intact. The prediction-aligned object is therefore not the
singleton $B(Y)$ but a neighborhood around it.

\Cref{fig:boundary-layers} sketches that neighborhood. The exact
boundary is the innermost set. The second layer, mixed-layer masks,
and an outer shell of proxy variables all sit close by, and each
trades a little finite-sample efficiency for robustness to recall
errors. The shell is not noise. In causal inference, proxy variables
outside the minimal boundary carry information about latent or
unmeasured causes, and are valuable precisely when boundary variables
are missing or noisy
\citep{10.1093/biomet/asy038,NEURIPS2021_dcf32197}. The operative
question is no longer ``did we recover $B(Y)$?'' but ``which set in
this neighborhood does a given regressor want?'' \Cref{sec:implications}
turns this question into concrete research directions.

\begin{figure*}[htp]
  \centering
  \includegraphics[width=0.92\textwidth]{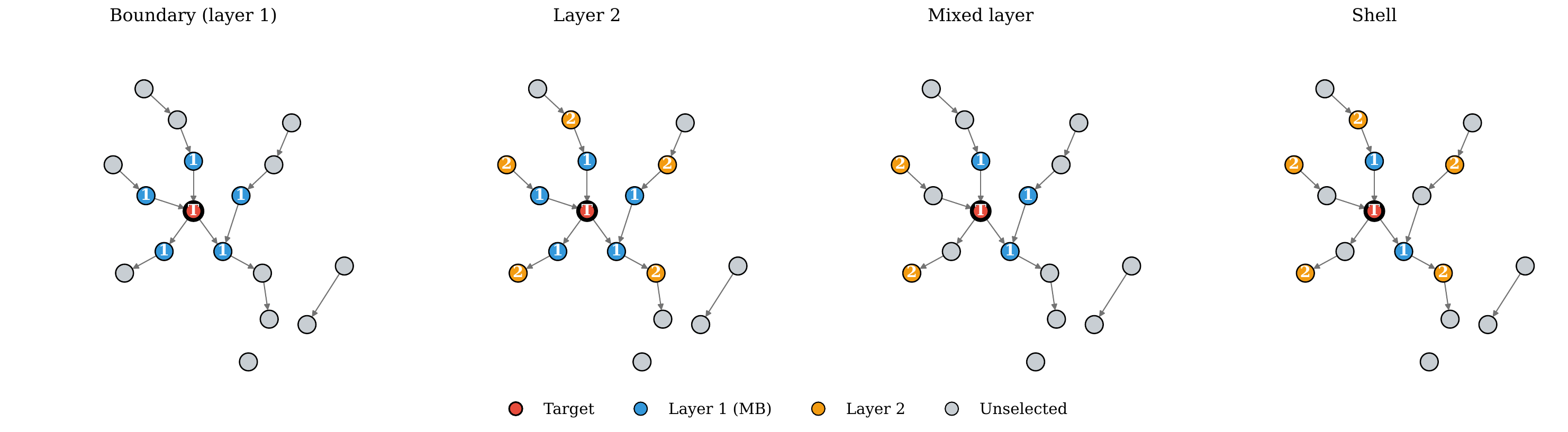}
  \caption{The exact
    boundary is the minimal sufficient set, but layer-2, mixed-layer,
    and shell masks can retain useful redundant or proxy variables.
    This distinction matters when prediction rewards recall more than
    exact minimality.}
  \Description{A four-panel DAG schematic showing the exact Markov
    boundary, the second boundary layer, a mixed-layer mask, and an
    outer shell around the target.}
  \label{fig:boundary-layers}
\end{figure*}

\section{Implications}
\label{sec:implications}

The three failure mechanisms of \Cref{sec:ugly} are not dead
ends.
Each points to a way forward, and both directions below reuse a
resource \SCMBench{} made explicit. A synthetic SCM prior carries
the ground-truth boundary $B(Y)$ alongside the data, a supervision
signal that real observational tables never provide.

\subsection{Scaling Markov-boundary estimation}
\label{sec:impl-scale}

The scalability failure of \Cref{sec:ugly-scale} is structural. A
constraint-based estimator pays an $O(F^d)$ conditional-independence
search for every new dataset. That cost is unavoidable only if the
search is repeated per dataset, and it need not be. Tabular foundation
models such as TabPFN and TabICL are already pre-trained on millions
of synthetic tasks
\citep{hollmann2022tabpfn,qu2025tabicl,muller2025position}. When those
tasks are generated from SCMs, each one carries not just a table and a
target but the target's Markov boundary and the generating graph. The
boundary is free supervision sitting unused in the prior.

This suggests pre-training a tabular model with a boundary-prediction
head alongside the regression head, under a joint objective
\begin{equation}
\mathcal{L}
  =
  \mathcal{L}_{\mathrm{pred}}(\hat y, y)
  + \lambda\,\mathcal{L}_{\mathrm{mask}}(\hat m, B).
\label{eq:joint-pretrain}
\end{equation}
At inference the model returns a prediction together with an amortized
boundary estimate, and the combinatorial search has been paid once,
during pre-training, rather than once per dataset. Causal foundation
models already make the analogous move for causal-effect estimation,
training on SCM-generated tasks and reusing the learned prior at test
time
\citep{robertson2025dopfnincontextlearningcausal,balazadeh2026causalpfn,
ma2025foundation}. The architecture is the same, and only the
supervision target changes. The asymmetry of \Cref{sec:ugly-asym}
should be written directly into $\mathcal{L}_{\mathrm{mask}}$ as a
recall-weighted penalty, so the estimator is discouraged from dropping
boundary features in the first place.

\subsection{Synergizing blanket and prediction}
\label{sec:impl-colearn}

Scaling the estimator does not by itself remove the objective mismatch
of \Cref{sec:ugly-asym}. Structural recovery and prediction remain
different targets. The deeper fix is to stop treating boundary
discovery as an unsupervised pre-processing step and let prediction
loss inform it. Treating the mask $m$ as a latent variable gives the
factorization
\begin{equation}
P(y, m \mid D) = P(y \mid m, D)\,P(m \mid D),
\label{eq:colearn}
\end{equation}
in which the predictor and the mask are learned together rather than
in sequence.

The content of \Cref{eq:colearn} is not the factorization, which is
just the chain rule, but the objective it licenses. Ordinary
supervised selection solves $\min_S\widehat R(S)$; the direction we
propose is
\begin{equation}
\min_S\ \widehat R(S)+\lambda\,\Omega(S;B),
\label{eq:structural-selection}
\end{equation}
with $\widehat R$ the prediction loss and $\Omega$ a structural prior
rewarding preservation of the sufficiency structure of $B$. This is
the missing middle of \Cref{fig:trinity}: implicit selectors optimize
$\widehat R$ and ignore $\Omega$, so nothing protects sufficiency,
while boundary discovery optimizes $\Omega$ blind to $\widehat R$, so
nothing tells it a false negative costs more than a false positive.
Causal structure does not identify the finite-sample optimum, as our
results show; it anchors sufficiency while prediction loss decides how
far to depart from minimality.

\section{Related Work}
\label{sec:related-work}

\noindent\textbf{Markov blanket discovery.}
Markov blanket discovery can be viewed as the local version of causal
discovery. Full-graph methods recover a DAG, CPDAG, or equivalence
class and then derive the target blanket from parents, children, and
spouses. This connects blanket recovery to score-based search, such as
GES \citep{10.1162/153244303321897717}, and to constraint-based causal
discovery, where conditional-independence tests determine the graph up
to Markov equivalence
\citep{spirtes2000causation,verma1990equivalence,
meek1995causal}. These methods are designed primarily for structure.
Their natural outputs are edges, equivalence classes, separating sets,
or local neighborhoods.

Local blanket algorithms avoid full DAG recovery by searching directly
around the target. Grow-Shrink expands and prunes a candidate blanket
through conditional-independence tests
\citep{NIPS1999_5d79099f}. HITON-MB and later local
methods use related tests to identify parents, children, and spouses
\citep{aliferis2003hiton,tsamardinos2003algorithms,
tsamardinos2006max,aliferis2010local}. Our study uses this literature
as the natural baseline for the ``estimate blanket, then predict''
pipeline. The difference is the evaluation target. We ask whether the
estimated blanket improves downstream regression, not whether it
maximizes exact structural recovery.

\noindent\textbf{Feature selection.}
The Markov blanket is also a classical target for feature selection.
Under the usual graphical assumptions, it is the minimal feature set
that preserves the target conditional, so it removes both irrelevant
variables and variables made redundant by the boundary. Causality-based
feature-selection work uses this idea to connect predictive
parsimony, robustness, and interpretability
\citep{yu2020causality,peters2017elements}. Recent
work~\citep{yin2024integrating} extends Markov blankets to
representation learning for domain generalization. In this view, causal
structure is useful not only because it explains the data-generating
process, but because it provides a principled feature mask for a
supervised learner.

Our results refine that feature-selection story. The oracle boundary
does improve prediction in wide, redundant regimes, but the
advantage is finite-sample and regressor-dependent. Moreover, exact
boundary identification is not the only goal relevant for prediction.
False negatives and false positives have different costs, and many
over-inclusive masks can preserve most of the oracle gain. This places
our work between causal feature selection and empirical model
selection. We keep the Markov boundary as the reference object, but we
evaluate masks by prediction loss.

\noindent\textbf{Tabular foundation models.}
Tabular foundation models use synthetic task priors to train predictors
that can be reused across tabular datasets. TabPFN demonstrates this
idea for small tabular tasks
\citep{hollmann2022tabpfn,muller2025position}, while TabICL
extends the in-context-learning framing to larger tabular settings
\citep{qu2025tabicl,qu2026tabiclv2}. In this paper,
these models are downstream regressors. They are not the main method
or the main narrative device. Their role is useful precisely because
they test whether the MB gap persists for modern tabular predictors
that already encode strong prior information.

Causal foundation models make a parallel move for causal inference.
They train on SCM-generated tasks and use the learned prior at test
time for causal queries
\citep{robertson2025dopfnincontextlearningcausal,balazadeh2026causalpfn,
ma2025foundation}. This line of work is relevant because SCM
priors contain ground-truth blankets in addition to samples and graph
structure. We do not propose a causal or tabular foundation model here.
Instead, our results suggest a future training paradigm. Use the
blanket information available in SCM priors to learn
prediction-aligned feature masks jointly with prediction.

\section{Conclusion}
\label{sec:conclusion}

Our main evidence is simulated. \SCMBench{} is broad across
feature
counts, graph densities, and six SCM families, but real-world
validation remains necessary~\citep{pmlr-v275-brouillard25a}. Good candidates include ARTH150, a
107-node Gaussian linear network
\citep{scutari2010learning,opgen2007correlation}, and
causalAssembly, a 98-node industrial assembly-line process
\citep{pmlr-v236-gobler24a}. DREAM4 size-100 and DREAM5
in-silico Net 1 provide continuous gene-regulatory networks
\citep{doi:10.1073/pnas.0913357107,marbach2012wisdom}.
SynTReN provides configurable continuous subnetworks from real
transcriptional networks \citep{van2006syntren}.
A second limitation is that regressors are
evaluated under a fixed protocol rather than fully fine-tuned per
dataset or cohort. A third limitation is scope. We study regression
with RMSE/MSE losses, not classification. Classification may interact
with boundary size, redundancy, and mask errors differently.

Markov boundaries are useful for understanding when feature parsimony
improves tabular prediction. The exact boundary is the minimal
population-sufficient feature set, and the oracle gap is real in the
high-dimensional redundant regime. But exact unsupervised boundary
recovery is too narrow as a prediction objective. Prediction needs
masks that preserve boundary information, control redundant features
according to the downstream regressor, and scale to the regime where
the oracle gap is visible.
The central lesson is therefore not that every predictor should recover the exact Markov boundary. Instead, the boundary should guide the objective, not determine the output. The open center of \Cref{fig:trinity} is filled by prediction loss with regularization toward population sufficiency, rather than by improved exact recovery.

\begin{acks}
  This research is funded by NSF grants \#2311716 “CausalBench: A Cyberinfrastructure for Causal-Learning Benchmarking for Efficacy, Reproducibility, and Scientific Collaboration”, \#2435886 “PanAX: Accelerating Epidemic Science through Causally-Informed Domain Generalization and Knowledge Transfer”, \#2622265. “ NSF Center for Analysis and Prediction of Pandemic Expansion (APPEX)” and by US Army Corps of Engineers "Engineering With Nature Initiative" through Cooperative Ecosystem Studies Unit Agreement \#W912HZ-21-2-0040
\end{acks}
\section{GenAI Disclosure}
AI Agents have been used in this work for coding, and for editorial support.
All AI-assisted text and code was reviewed and edited by the authors.

\bibliographystyle{ACM-Reference-Format}
\bibliography{references}

\end{document}

%% file: figs/src/trinity.tikz
%
\begin{tikzpicture}
  \coordinate (top)   at (0,2.3);
  \coordinate (left)  at (-1.992,-1.15);
  \coordinate (right) at (1.992,-1.15);
  \draw[line width=0.9pt, color=hairline]
    (top) -- (left) -- (right) -- cycle;
  \fill[bodyink] (top) circle (2pt);
  \fill[bodyink] (left) circle (2pt);
  \fill[bodyink] (right) circle (2pt);
  \node[bodyink, font=\bfseries, anchor=south]
    at (0,2.46) {Sufficiency};
  \node[bodyink, font=\bfseries, anchor=north]
    at (-1.992,-1.34) {Scalability};
  \node[bodyink, font=\bfseries, anchor=north]
    at (1.992,-1.34) {Minimality};
  \node[catOracle, font=\footnotesize\bfseries, align=center,
    fill=insightfill, inner sep=2pt]
    at (-0.996,0.575) {Tabular\\foundation\\models};
  \node[catGood, font=\footnotesize\bfseries, align=center,
    fill=insightfill, inner sep=2pt]
    at (0.996,0.575) {Markov\\boundary\\discovery};
  \node[catBad, font=\footnotesize\bfseries, align=center,
    fill=insightfill, inner sep=2pt]
    at (0,-1.15) {Implicit\\feature\\selection};
  \node[mutedtext, font=\itshape\footnotesize, align=center]
    at (0,-0.1) {Open design space};
\end{tikzpicture}%

%% file: arxiv.bbl

\begin{thebibliography}{43}


\ifx \showCODEN    \undefined \def \showCODEN     #1{\unskip}     \fi
\ifx \showISBNx    \undefined \def \showISBNx     #1{\unskip}     \fi
\ifx \showISBNxiii \undefined \def \showISBNxiii  #1{\unskip}     \fi
\ifx \showISSN     \undefined \def \showISSN      #1{\unskip}     \fi
\ifx \showLCCN     \undefined \def \showLCCN      #1{\unskip}     \fi
\ifx \shownote     \undefined \def \shownote      #1{#1}          \fi
\ifx \showarticletitle \undefined \def \showarticletitle #1{#1}   \fi
\ifx \showURL      \undefined \def \showURL       {\relax}        \fi
\providecommand\bibfield[2]{#2}
\providecommand\bibinfo[2]{#2}
\providecommand\natexlab[1]{#1}
\providecommand\showeprint[2][]{arXiv:#2}

\bibitem[Aliferis et~al\mbox{.}(2010)]%
        {aliferis2010local}
\bibfield{author}{\bibinfo{person}{Constantin~F Aliferis}, \bibinfo{person}{Alexander Statnikov}, \bibinfo{person}{Ioannis Tsamardinos}, \bibinfo{person}{Subramani Mani}, {and} \bibinfo{person}{Xenofon~D Koutsoukos}.} \bibinfo{year}{2010}\natexlab{}.
\newblock \showarticletitle{Local causal and {Markov} blanket induction for causal discovery and feature selection for classification part {I}: Algorithms and empirical evaluation}.
\newblock \bibinfo{journal}{\emph{Journal of Machine Learning Research}}  \bibinfo{volume}{11} (\bibinfo{year}{2010}), \bibinfo{pages}{171--234}.
\newblock


\bibitem[Aliferis et~al\mbox{.}(2003)]%
        {aliferis2003hiton}
\bibfield{author}{\bibinfo{person}{Constantin~F Aliferis}, \bibinfo{person}{Ioannis Tsamardinos}, {and} \bibinfo{person}{Alexander Statnikov}.} \bibinfo{year}{2003}\natexlab{}.
\newblock \showarticletitle{HITON: a novel Markov Blanket algorithm for optimal variable selection}. In \bibinfo{booktitle}{\emph{AMIA annual symposium proceedings}}, Vol.~\bibinfo{volume}{2003}. \bibinfo{pages}{21}.
\newblock


\bibitem[Balazadeh et~al\mbox{.}(2026)]%
        {balazadeh2026causalpfn}
\bibfield{author}{\bibinfo{person}{Vahid Balazadeh}, \bibinfo{person}{Hamidreza Kamkari}, \bibinfo{person}{Valentin Thomas}, \bibinfo{person}{Junwei Ma}, \bibinfo{person}{Bingru Li}, \bibinfo{person}{Jesse~C. Cresswell}, {and} \bibinfo{person}{Rahul Krishnan}.} \bibinfo{year}{2026}\natexlab{}.
\newblock \showarticletitle{Causal{PFN}: Amortized Causal Effect Estimation via In-Context Learning}. In \bibinfo{booktitle}{\emph{The Thirty-ninth Annual Conference on Neural Information Processing Systems}}.
\newblock
\urldef\tempurl%
\url{https://openreview.net/forum?id=RblaNJGx8C}
\showURL{%
\tempurl}


\bibitem[Brouillard et~al\mbox{.}(2025)]%
        {pmlr-v275-brouillard25a}
\bibfield{author}{\bibinfo{person}{Philippe Brouillard}, \bibinfo{person}{Chandler Squires}, \bibinfo{person}{Jonas Wahl}, \bibinfo{person}{Konrad K"{o}rding}, \bibinfo{person}{Karen Sachs}, \bibinfo{person}{Alexandre Drouin}, {and} \bibinfo{person}{Dhanya Sridhar}.} \bibinfo{year}{2025}\natexlab{}.
\newblock \showarticletitle{The Landscape of Causal Discovery Data: Grounding Causal Discovery in Real-World Applications}. In \bibinfo{booktitle}{\emph{Proceedings of the Fourth Conference on Causal Learning and Reasoning}} \emph{(\bibinfo{series}{Proceedings of Machine Learning Research}, Vol.~\bibinfo{volume}{275})}, \bibfield{editor}{\bibinfo{person}{Biwei Huang} {and} \bibinfo{person}{Mathias Drton}} (Eds.). \bibinfo{publisher}{PMLR}, \bibinfo{pages}{834--873}.
\newblock
\urldef\tempurl%
\url{https://proceedings.mlr.press/v275/brouillard25a.html}
\showURL{%
\tempurl}


\bibitem[Chen and Guestrin(2016)]%
        {10.1145/2939672.2939785}
\bibfield{author}{\bibinfo{person}{Tianqi Chen} {and} \bibinfo{person}{Carlos Guestrin}.} \bibinfo{year}{2016}\natexlab{}.
\newblock \showarticletitle{XGBoost: A Scalable Tree Boosting System}. In \bibinfo{booktitle}{\emph{Proceedings of the 22nd ACM SIGKDD International Conference on Knowledge Discovery and Data Mining}} (San Francisco, California, USA) \emph{(\bibinfo{series}{KDD '16})}. \bibinfo{publisher}{Association for Computing Machinery}, \bibinfo{address}{New York, NY, USA}, \bibinfo{pages}{785–794}.
\newblock
\showISBNx{9781450342322}
\href{https://doi.org/10.1145/2939672.2939785}{doi:\nolinkurl{10.1145/2939672.2939785}}


\bibitem[Chickering(2003)]%
        {10.1162/153244303321897717}
\bibfield{author}{\bibinfo{person}{David~Maxwell Chickering}.} \bibinfo{year}{2003}\natexlab{}.
\newblock \showarticletitle{Optimal structure identification with greedy search}.
\newblock \bibinfo{journal}{\emph{J. Mach. Learn. Res.}} \bibinfo{volume}{3}, \bibinfo{number}{null} (\bibinfo{date}{March} \bibinfo{year}{2003}), \bibinfo{pages}{507–554}.
\newblock
\showISSN{1532-4435}


\bibitem[Erd{\H{o}}s and R{\'e}nyi(1960)]%
        {erdos1960evolution}
\bibfield{author}{\bibinfo{person}{Paul Erd{\H{o}}s} {and} \bibinfo{person}{Alfr{\'e}d R{\'e}nyi}.} \bibinfo{year}{1960}\natexlab{}.
\newblock \showarticletitle{On the evolution of random graphs}.
\newblock \bibinfo{journal}{\emph{Publications of the Mathematical Institute of the Hungarian Academy of Sciences}}  \bibinfo{volume}{5} (\bibinfo{year}{1960}), \bibinfo{pages}{17--61}.
\newblock


\bibitem[G\"obler et~al\mbox{.}(2024)]%
        {pmlr-v236-gobler24a}
\bibfield{author}{\bibinfo{person}{Konstantin G\"obler}, \bibinfo{person}{Tobias Windisch}, \bibinfo{person}{Mathias Drton}, \bibinfo{person}{Tim Pychynski}, \bibinfo{person}{Martin Roth}, {and} \bibinfo{person}{Steffen Sonntag}.} \bibinfo{year}{2024}\natexlab{}.
\newblock \showarticletitle{$\texttt{causalAssembly}$: Generating Realistic Production Data for Benchmarking Causal Discovery}. In \bibinfo{booktitle}{\emph{Proceedings of the Third Conference on Causal Learning and Reasoning}} \emph{(\bibinfo{series}{Proceedings of Machine Learning Research}, Vol.~\bibinfo{volume}{236})}, \bibfield{editor}{\bibinfo{person}{Francesco Locatello} {and} \bibinfo{person}{Vanessa Didelez}} (Eds.). \bibinfo{publisher}{PMLR}, \bibinfo{pages}{609--642}.
\newblock
\urldef\tempurl%
\url{https://proceedings.mlr.press/v236/gobler24a.html}
\showURL{%
\tempurl}


\bibitem[Hastie et~al\mbox{.}(2009)]%
        {hastie2009elements}
\bibfield{author}{\bibinfo{person}{Trevor Hastie}, \bibinfo{person}{Robert Tibshirani}, {and} \bibinfo{person}{Jerome Friedman}.} \bibinfo{year}{2009}\natexlab{}.
\newblock \bibinfo{booktitle}{\emph{The Elements of Statistical Learning: Data Mining, Inference, and Prediction} (\bibinfo{edition}{2} ed.)}.
\newblock \bibinfo{publisher}{Springer}, \bibinfo{address}{New York, NY}.
\newblock
\showISBNx{978-0-387-84857-0}
\href{https://doi.org/10.1007/978-0-387-84858-7}{doi:\nolinkurl{10.1007/978-0-387-84858-7}}


\bibitem[Hoerl and Kennard(1970)]%
        {Hoerl01021970}
\bibfield{author}{\bibinfo{person}{Arthur~E. Hoerl} {and} \bibinfo{person}{Robert~W. Kennard}.} \bibinfo{year}{1970}\natexlab{}.
\newblock \showarticletitle{Ridge Regression: Biased Estimation for Nonorthogonal Problems}.
\newblock \bibinfo{journal}{\emph{Technometrics}} \bibinfo{volume}{12}, \bibinfo{number}{1} (\bibinfo{year}{1970}), \bibinfo{pages}{55--67}.
\newblock
\showeprint{https://doi.org/10.1080/00401706.1970.10488634}
\href{https://doi.org/10.1080/00401706.1970.10488634}{doi:\nolinkurl{10.1080/00401706.1970.10488634}}


\bibitem[Hollmann et~al\mbox{.}(2023)]%
        {hollmann2022tabpfn}
\bibfield{author}{\bibinfo{person}{Noah Hollmann}, \bibinfo{person}{Samuel M{\"u}ller}, \bibinfo{person}{Katharina Eggensperger}, {and} \bibinfo{person}{Frank Hutter}.} \bibinfo{year}{2023}\natexlab{}.
\newblock \showarticletitle{Tab{PFN}: A Transformer That Solves Small Tabular Classification Problems in a Second}. In \bibinfo{booktitle}{\emph{The Eleventh International Conference on Learning Representations}}.
\newblock
\urldef\tempurl%
\url{https://openreview.net/forum?id=cp5PvcI6w8_}
\showURL{%
\tempurl}


\bibitem[Hornik et~al\mbox{.}(1989)]%
        {HORNIK1989359}
\bibfield{author}{\bibinfo{person}{Kurt Hornik}, \bibinfo{person}{Maxwell Stinchcombe}, {and} \bibinfo{person}{Halbert White}.} \bibinfo{year}{1989}\natexlab{}.
\newblock \showarticletitle{Multilayer feedforward networks are universal approximators}.
\newblock \bibinfo{journal}{\emph{Neural Networks}} \bibinfo{volume}{2}, \bibinfo{number}{5} (\bibinfo{year}{1989}), \bibinfo{pages}{359--366}.
\newblock
\showISSN{0893-6080}
\href{https://doi.org/10.1016/0893-6080(89)90020-8}{doi:\nolinkurl{10.1016/0893-6080(89)90020-8}}


\bibitem[Jennewein et~al\mbox{.}(2023)]%
        {sol}
\bibfield{author}{\bibinfo{person}{Douglas~M. Jennewein}, \bibinfo{person}{Johnathan Lee}, \bibinfo{person}{Chris Kurtz}, \bibinfo{person}{William Dizon}, \bibinfo{person}{Ian Shaeffer}, \bibinfo{person}{Alan Chapman}, \bibinfo{person}{Alejandro Chiquete}, \bibinfo{person}{Josh Burks}, \bibinfo{person}{Amber Carlson}, \bibinfo{person}{Natalie Mason}, \bibinfo{person}{Arhat Kobwala}, \bibinfo{person}{Thirugnanam Jagadeesan}, \bibinfo{person}{Pratiksha~Sharma Basani}, \bibinfo{person}{Torey Battelle}, \bibinfo{person}{Rebecca Belshe}, \bibinfo{person}{Debra McCaffrey}, \bibinfo{person}{Marisa Brazil}, \bibinfo{person}{Chaitanya Inouye}, \bibinfo{person}{Rajiv Kalia}, \bibinfo{person}{Ken Le}, \bibinfo{person}{Christopher Rylko}, {and} \bibinfo{person}{Steve Ruben}.} \bibinfo{year}{2023}\natexlab{}.
\newblock \showarticletitle{The {Sol} Supercomputer at {Arizona State University}}. In \bibinfo{booktitle}{\emph{Practice and Experience in Advanced Research Computing (PEARC '23)}}. \bibinfo{publisher}{ACM}, \bibinfo{pages}{296--301}.
\newblock
\href{https://doi.org/10.1145/3569951.3597573}{doi:\nolinkurl{10.1145/3569951.3597573}}


\bibitem[Koller and Friedman(2009)]%
        {10.5555/1795555}
\bibfield{author}{\bibinfo{person}{Daphne Koller} {and} \bibinfo{person}{Nir Friedman}.} \bibinfo{year}{2009}\natexlab{}.
\newblock \bibinfo{booktitle}{\emph{Probabilistic Graphical Models: Principles and Techniques - Adaptive Computation and Machine Learning}}.
\newblock \bibinfo{publisher}{The MIT Press}.
\newblock
\showISBNx{0262013193}


\bibitem[Laird and Ware(1982)]%
        {c0ae3670-c51a-3c43-9f8f-601a49c19723}
\bibfield{author}{\bibinfo{person}{Nan~M. Laird} {and} \bibinfo{person}{James~H. Ware}.} \bibinfo{year}{1982}\natexlab{}.
\newblock \showarticletitle{Random-Effects Models for Longitudinal Data}.
\newblock \bibinfo{journal}{\emph{Biometrics}} \bibinfo{volume}{38}, \bibinfo{number}{4} (\bibinfo{year}{1982}), \bibinfo{pages}{963--974}.
\newblock
\showISSN{0006341X, 15410420}
\urldef\tempurl%
\url{http://www.jstor.org/stable/2529876}
\showURL{%
\tempurl}


\bibitem[Ma et~al\mbox{.}(2026)]%
        {ma2025foundation}
\bibfield{author}{\bibinfo{person}{Yuchen Ma}, \bibinfo{person}{Dennis Frauen}, \bibinfo{person}{Emil Javurek}, {and} \bibinfo{person}{Stefan Feuerriegel}.} \bibinfo{year}{2026}\natexlab{}.
\newblock \showarticletitle{Foundation Models for Causal Inference via Prior-Data Fitted Networks}. In \bibinfo{booktitle}{\emph{The Fourteenth International Conference on Learning Representations}}.
\newblock
\urldef\tempurl%
\url{https://openreview.net/forum?id=d2L1ndOKjq}
\showURL{%
\tempurl}


\bibitem[Marbach et~al\mbox{.}(2012)]%
        {marbach2012wisdom}
\bibfield{author}{\bibinfo{person}{Daniel Marbach}, \bibinfo{person}{James~C Costello}, \bibinfo{person}{Robert K{\"u}ffner}, \bibinfo{person}{Nicole~M Vega}, \bibinfo{person}{Robert~J Prill}, \bibinfo{person}{Diogo~M Camacho}, \bibinfo{person}{Kyle~R Allison}, \bibinfo{person}{Manolis Kellis}, \bibinfo{person}{James~J Collins}, {et~al\mbox{.}}} \bibinfo{year}{2012}\natexlab{}.
\newblock \showarticletitle{Wisdom of crowds for robust gene network inference}.
\newblock \bibinfo{journal}{\emph{Nature methods}} \bibinfo{volume}{9}, \bibinfo{number}{8} (\bibinfo{year}{2012}), \bibinfo{pages}{796--804}.
\newblock


\bibitem[Marbach et~al\mbox{.}(2010)]%
        {doi:10.1073/pnas.0913357107}
\bibfield{author}{\bibinfo{person}{Daniel Marbach}, \bibinfo{person}{Robert~J. Prill}, \bibinfo{person}{Thomas Schaffter}, \bibinfo{person}{Claudio Mattiussi}, \bibinfo{person}{Dario Floreano}, {and} \bibinfo{person}{Gustavo Stolovitzky}.} \bibinfo{year}{2010}\natexlab{}.
\newblock \showarticletitle{Revealing strengths and weaknesses of methods for gene network inference}.
\newblock \bibinfo{journal}{\emph{Proceedings of the National Academy of Sciences}} \bibinfo{volume}{107}, \bibinfo{number}{14} (\bibinfo{year}{2010}), \bibinfo{pages}{6286--6291}.
\newblock
\showeprint{https://www.pnas.org/doi/pdf/10.1073/pnas.0913357107}
\href{https://doi.org/10.1073/pnas.0913357107}{doi:\nolinkurl{10.1073/pnas.0913357107}}


\bibitem[Margaritis and Thrun(1999)]%
        {NIPS1999_5d79099f}
\bibfield{author}{\bibinfo{person}{Dimitris Margaritis} {and} \bibinfo{person}{Sebastian Thrun}.} \bibinfo{year}{1999}\natexlab{}.
\newblock \showarticletitle{Bayesian Network Induction via Local Neighborhoods}. In \bibinfo{booktitle}{\emph{Advances in Neural Information Processing Systems}}, \bibfield{editor}{\bibinfo{person}{S.~Solla}, \bibinfo{person}{T.~Leen}, {and} \bibinfo{person}{K.~M\"{u}ller}} (Eds.), Vol.~\bibinfo{volume}{12}. \bibinfo{publisher}{MIT Press}.
\newblock
\urldef\tempurl%
\url{https://proceedings.neurips.cc/paper_files/paper/1999/file/5d79099fcdf499f12b79770834c0164a-Paper.pdf}
\showURL{%
\tempurl}


\bibitem[Meek(1995)]%
        {meek1995causal}
\bibfield{author}{\bibinfo{person}{Christopher Meek}.} \bibinfo{year}{1995}\natexlab{}.
\newblock \showarticletitle{Causal inference and causal explanation with background knowledge}.
\newblock \bibinfo{journal}{\emph{Proceedings of the Eleventh Conference on Uncertainty in Artificial Intelligence}} (\bibinfo{year}{1995}), \bibinfo{pages}{403--410}.
\newblock


\bibitem[Miao et~al\mbox{.}(2018)]%
        {10.1093/biomet/asy038}
\bibfield{author}{\bibinfo{person}{Wang Miao}, \bibinfo{person}{Zhi Geng}, {and} \bibinfo{person}{Eric~J Tchetgen~Tchetgen}.} \bibinfo{year}{2018}\natexlab{}.
\newblock \showarticletitle{Identifying causal effects with proxy variables of an unmeasured confounder}.
\newblock \bibinfo{journal}{\emph{Biometrika}} \bibinfo{volume}{105}, \bibinfo{number}{4} (\bibinfo{date}{12} \bibinfo{year}{2018}), \bibinfo{pages}{987--993}.
\newblock
\showISSN{0006-3444}
\showeprint{https://academic.oup.com/biomet/article-pdf/105/4/987/27121264/asy038.pdf}
\href{https://doi.org/10.1093/biomet/asy038}{doi:\nolinkurl{10.1093/biomet/asy038}}


\bibitem[M{\"u}ller et~al\mbox{.}(2022)]%
        {muller2022transformers}
\bibfield{author}{\bibinfo{person}{Samuel M{\"u}ller}, \bibinfo{person}{Noah Hollmann}, \bibinfo{person}{Sebastian Pineda~Arango}, \bibinfo{person}{Josif Grabocka}, {and} \bibinfo{person}{Frank Hutter}.} \bibinfo{year}{2022}\natexlab{}.
\newblock \showarticletitle{Transformers Can Do {B}ayesian Inference}. In \bibinfo{booktitle}{\emph{International Conference on Learning Representations}}.
\newblock
\urldef\tempurl%
\url{https://openreview.net/forum?id=KSugKcbNf9}
\showURL{%
\tempurl}


\bibitem[M{\"u}ller et~al\mbox{.}(2025)]%
        {muller2025position}
\bibfield{author}{\bibinfo{person}{Samuel M{\"u}ller}, \bibinfo{person}{Arik Reuter}, \bibinfo{person}{Noah Hollmann}, \bibinfo{person}{David R{\"u}gamer}, {and} \bibinfo{person}{Frank Hutter}.} \bibinfo{year}{2025}\natexlab{}.
\newblock \showarticletitle{Position: The Future of {B}ayesian Prediction Is Prior-Fitted}. In \bibinfo{booktitle}{\emph{Proceedings of the 42nd International Conference on Machine Learning}} \emph{(\bibinfo{series}{Proceedings of Machine Learning Research}, Vol.~\bibinfo{volume}{267})}. \bibinfo{publisher}{PMLR}, \bibinfo{address}{Vancouver, Canada}.
\newblock
\urldef\tempurl%
\url{https://proceedings.mlr.press/v267/muller25d.html}
\showURL{%
\tempurl}


\bibitem[Opgen-Rhein and Strimmer(2007)]%
        {opgen2007correlation}
\bibfield{author}{\bibinfo{person}{Rainer Opgen-Rhein} {and} \bibinfo{person}{Korbinian Strimmer}.} \bibinfo{year}{2007}\natexlab{}.
\newblock \showarticletitle{From correlation to causation networks: a simple approximate learning algorithm and its application to high-dimensional plant gene expression data}.
\newblock \bibinfo{journal}{\emph{BMC systems biology}} \bibinfo{volume}{1}, \bibinfo{number}{1} (\bibinfo{year}{2007}), \bibinfo{pages}{37}.
\newblock


\bibitem[Pearl(1988)]%
        {10.5555/534975}
\bibfield{author}{\bibinfo{person}{Judea Pearl}.} \bibinfo{year}{1988}\natexlab{}.
\newblock \bibinfo{booktitle}{\emph{Probabilistic Reasoning in Intelligent Systems: Networks of Plausible Inference}}.
\newblock \bibinfo{publisher}{Morgan Kaufmann Publishers Inc.}, \bibinfo{address}{San Francisco, CA, USA}.
\newblock
\showISBNx{1558604790}


\bibitem[Pearl(2009)]%
        {10.5555/1642718}
\bibfield{author}{\bibinfo{person}{Judea Pearl}.} \bibinfo{year}{2009}\natexlab{}.
\newblock \bibinfo{booktitle}{\emph{Causality: Models, Reasoning and Inference} (\bibinfo{edition}{2nd} ed.)}.
\newblock \bibinfo{publisher}{Cambridge University Press}, \bibinfo{address}{USA}.
\newblock
\showISBNx{052189560X}


\bibitem[Peters et~al\mbox{.}(2017)]%
        {peters2017elements}
\bibfield{author}{\bibinfo{person}{Jonas Peters}, \bibinfo{person}{Dominik Janzing}, {and} \bibinfo{person}{Bernhard Sch{\"o}lkopf}.} \bibinfo{year}{2017}\natexlab{}.
\newblock \bibinfo{booktitle}{\emph{Elements of Causal Inference: Foundations and Learning Algorithms}}.
\newblock \bibinfo{publisher}{MIT Press}, \bibinfo{address}{Cambridge, MA}.
\newblock


\bibitem[Qu et~al\mbox{.}(2025)]%
        {qu2025tabicl}
\bibfield{author}{\bibinfo{person}{Jingang Qu}, \bibinfo{person}{David Holzm{\"u}ller}, \bibinfo{person}{Ga{\"e}l Varoquaux}, {and} \bibinfo{person}{Marine Le~Morvan}.} \bibinfo{year}{2025}\natexlab{}.
\newblock \showarticletitle{Tab{ICL}: {A} Tabular Foundation Model for In-Context Learning on Large Data}. In \bibinfo{booktitle}{\emph{Proceedings of the 42nd International Conference on Machine Learning}} \emph{(\bibinfo{series}{Proceedings of Machine Learning Research}, Vol.~\bibinfo{volume}{267})}. \bibinfo{publisher}{PMLR}, \bibinfo{address}{Vancouver, Canada}, \bibinfo{pages}{50817--50847}.
\newblock
\urldef\tempurl%
\url{https://proceedings.mlr.press/v267/qu25d.html}
\showURL{%
\tempurl}


\bibitem[Qu et~al\mbox{.}(2026)]%
        {qu2026tabiclv2}
\bibfield{author}{\bibinfo{person}{Jingang Qu}, \bibinfo{person}{David Holzm{\"u}ller}, \bibinfo{person}{Ga{\"e}l Varoquaux}, {and} \bibinfo{person}{Marine Le~Morvan}.} \bibinfo{year}{2026}\natexlab{}.
\newblock \bibinfo{title}{{TabICLv2}: {A} better, faster, scalable, and open tabular foundation model}.
\newblock
\showeprint[arxiv]{2602.11139}~[cs.LG]
\urldef\tempurl%
\url{https://arxiv.org/abs/2602.11139}
\showURL{%
\tempurl}


\bibitem[Reisach et~al\mbox{.}(2021)]%
        {reisach2021beware}
\bibfield{author}{\bibinfo{person}{Alexander Reisach}, \bibinfo{person}{Christof Seiler}, {and} \bibinfo{person}{Sebastian Weichwald}.} \bibinfo{year}{2021}\natexlab{}.
\newblock \showarticletitle{Beware of the simulated dag! causal discovery benchmarks may be easy to game}.
\newblock   \bibinfo{volume}{34} (\bibinfo{year}{2021}), \bibinfo{pages}{27772--27784}.
\newblock


\bibitem[Reisach et~al\mbox{.}(2023)]%
        {3666122.3666158}
\bibfield{author}{\bibinfo{person}{Alexander~G. Reisach}, \bibinfo{person}{Myriam Tami}, \bibinfo{person}{Christof Seiler}, \bibinfo{person}{Antoine Chambaz}, {and} \bibinfo{person}{Sebastian Weichwald}.} \bibinfo{year}{2023}\natexlab{}.
\newblock \showarticletitle{A scale-invariant sorting criterion to find a causal order in additive noise models}. In \bibinfo{booktitle}{\emph{Proceedings of the 37th International Conference on Neural Information Processing Systems}} (New Orleans, LA, USA) \emph{(\bibinfo{series}{NIPS '23})}. \bibinfo{publisher}{Curran Associates Inc.}, \bibinfo{address}{Red Hook, NY, USA}, Article \bibinfo{articleno}{36}, \bibinfo{numpages}{23}~pages.
\newblock


\bibitem[Robertson et~al\mbox{.}(2025)]%
        {robertson2025dopfnincontextlearningcausal}
\bibfield{author}{\bibinfo{person}{Jake Robertson}, \bibinfo{person}{Arik Reuter}, \bibinfo{person}{Siyuan Guo}, \bibinfo{person}{Noah Hollmann}, \bibinfo{person}{Frank Hutter}, {and} \bibinfo{person}{Bernhard Schölkopf}.} \bibinfo{year}{2025}\natexlab{}.
\newblock \bibinfo{title}{Do-PFN: In-Context Learning for Causal Effect Estimation}.
\newblock
\showeprint[arxiv]{2506.06039}~[cs.LG]
\urldef\tempurl%
\url{https://arxiv.org/abs/2506.06039}
\showURL{%
\tempurl}


\bibitem[Scutari(2010)]%
        {scutari2010learning}
\bibfield{author}{\bibinfo{person}{Marco Scutari}.} \bibinfo{year}{2010}\natexlab{}.
\newblock \showarticletitle{Learning Bayesian networks with the bnlearn R package}.
\newblock \bibinfo{journal}{\emph{Journal of statistical software}}  \bibinfo{volume}{35} (\bibinfo{year}{2010}), \bibinfo{pages}{1--22}.
\newblock


\bibitem[Spirtes et~al\mbox{.}(2000)]%
        {spirtes2000causation}
\bibfield{author}{\bibinfo{person}{Peter Spirtes}, \bibinfo{person}{Clark Glymour}, {and} \bibinfo{person}{Richard Scheines}.} \bibinfo{year}{2000}\natexlab{}.
\newblock \bibinfo{booktitle}{\emph{Causation, Prediction, and Search} (\bibinfo{edition}{2} ed.)}.
\newblock \bibinfo{publisher}{MIT Press}, \bibinfo{address}{Cambridge, MA}.
\newblock


\bibitem[Tibshirani(1996)]%
        {10.1111/j.2517-6161.1996.tb02080.x}
\bibfield{author}{\bibinfo{person}{Robert Tibshirani}.} \bibinfo{year}{1996}\natexlab{}.
\newblock \showarticletitle{Regression Shrinkage and Selection Via the Lasso}.
\newblock \bibinfo{journal}{\emph{Journal of the Royal Statistical Society: Series B (Methodological)}} \bibinfo{volume}{58}, \bibinfo{number}{1} (\bibinfo{date}{01} \bibinfo{year}{1996}), \bibinfo{pages}{267--288}.
\newblock
\showISSN{0035-9246}
\href{https://doi.org/10.1111/j.2517-6161.1996.tb02080.x}{doi:\nolinkurl{10.1111/j.2517-6161.1996.tb02080.x}}


\bibitem[Tsamardinos et~al\mbox{.}(2003)]%
        {tsamardinos2003algorithms}
\bibfield{author}{\bibinfo{person}{Ioannis Tsamardinos}, \bibinfo{person}{Constantin~F. Aliferis}, {and} \bibinfo{person}{Alexander Statnikov}.} \bibinfo{year}{2003}\natexlab{}.
\newblock \showarticletitle{Algorithms for Large Scale {Markov} Blanket Discovery}. In \bibinfo{booktitle}{\emph{Proceedings of the Sixteenth International Florida Artificial Intelligence Research Society Conference (FLAIRS 2003)}}. \bibinfo{publisher}{AAAI Press}, \bibinfo{address}{Menlo Park, CA}, \bibinfo{pages}{376--381}.
\newblock
\urldef\tempurl%
\url{https://aaai.org/papers/flairs-2003-073/}
\showURL{%
\tempurl}


\bibitem[Tsamardinos et~al\mbox{.}(2006)]%
        {tsamardinos2006max}
\bibfield{author}{\bibinfo{person}{Ioannis Tsamardinos}, \bibinfo{person}{Laura~E Brown}, {and} \bibinfo{person}{Constantin~F Aliferis}.} \bibinfo{year}{2006}\natexlab{}.
\newblock \showarticletitle{The max-min hill-climbing {Bayesian} network structure learning algorithm}.
\newblock \bibinfo{journal}{\emph{Machine Learning}} \bibinfo{volume}{65}, \bibinfo{number}{1} (\bibinfo{year}{2006}), \bibinfo{pages}{31--78}.
\newblock


\bibitem[Van~den Bulcke et~al\mbox{.}(2006)]%
        {van2006syntren}
\bibfield{author}{\bibinfo{person}{Tim Van~den Bulcke}, \bibinfo{person}{Koenraad Van~Leemput}, \bibinfo{person}{Bart Naudts}, \bibinfo{person}{Piet van Remortel}, \bibinfo{person}{Hongwu Ma}, \bibinfo{person}{Alain Verschoren}, \bibinfo{person}{Bart De~Moor}, {and} \bibinfo{person}{Kathleen Marchal}.} \bibinfo{year}{2006}\natexlab{}.
\newblock \showarticletitle{SynTReN: a generator of synthetic gene expression data for design and analysis of structure learning algorithms}.
\newblock \bibinfo{journal}{\emph{BMC bioinformatics}} \bibinfo{volume}{7}, \bibinfo{number}{1} (\bibinfo{year}{2006}), \bibinfo{pages}{43}.
\newblock


\bibitem[Verma and Pearl(1990)]%
        {verma1990equivalence}
\bibfield{author}{\bibinfo{person}{Thomas Verma} {and} \bibinfo{person}{Judea Pearl}.} \bibinfo{year}{1990}\natexlab{}.
\newblock \showarticletitle{Equivalence and synthesis of causal models}.
\newblock \bibinfo{journal}{\emph{Proceedings of the Sixth Conference on Uncertainty in Artificial Intelligence}} (\bibinfo{year}{1990}), \bibinfo{pages}{220--227}.
\newblock


\bibitem[Wooldridge(2010)]%
        {0f8094d1-2ebf-3360-90f1-98e50c172f31}
\bibfield{author}{\bibinfo{person}{Jefrey~M. Wooldridge}.} \bibinfo{year}{2010}\natexlab{}.
\newblock \bibinfo{booktitle}{\emph{Econometric Analysis of Cross Section and Panel Data}}.
\newblock \bibinfo{publisher}{The MIT Press}.
\newblock
\showISBNx{9780262232586}
\urldef\tempurl%
\url{http://www.jstor.org/stable/j.ctt5hhcfr}
\showURL{%
\tempurl}


\bibitem[Xu et~al\mbox{.}(2021)]%
        {NEURIPS2021_dcf32197}
\bibfield{author}{\bibinfo{person}{Liyuan Xu}, \bibinfo{person}{Heishiro Kanagawa}, {and} \bibinfo{person}{Arthur Gretton}.} \bibinfo{year}{2021}\natexlab{}.
\newblock \showarticletitle{Deep Proxy Causal Learning and its Application to Confounded Bandit Policy Evaluation}. In \bibinfo{booktitle}{\emph{Advances in Neural Information Processing Systems}}, \bibfield{editor}{\bibinfo{person}{M.~Ranzato}, \bibinfo{person}{A.~Beygelzimer}, \bibinfo{person}{Y.~Dauphin}, \bibinfo{person}{P.S. Liang}, {and} \bibinfo{person}{J.~Wortman Vaughan}} (Eds.), Vol.~\bibinfo{volume}{34}. \bibinfo{publisher}{Curran Associates, Inc.}, \bibinfo{pages}{26264--26275}.
\newblock
\urldef\tempurl%
\url{https://proceedings.neurips.cc/paper_files/paper/2021/file/dcf3219715a7c9cd9286f19db46f2384-Paper.pdf}
\showURL{%
\tempurl}


\bibitem[Yin et~al\mbox{.}(2024)]%
        {yin2024integrating}
\bibfield{author}{\bibinfo{person}{Naiyu Yin}, \bibinfo{person}{Hanjing Wang}, \bibinfo{person}{Yue Yu}, \bibinfo{person}{Tian Gao}, \bibinfo{person}{Amit Dhurandhar}, {and} \bibinfo{person}{Qiang Ji}.} \bibinfo{year}{2024}\natexlab{}.
\newblock \showarticletitle{Integrating Markov blanket discovery into causal representation learning for domain generalization}. In \bibinfo{booktitle}{\emph{European Conference on Computer Vision}}. Springer, \bibinfo{pages}{271--288}.
\newblock


\bibitem[Yu et~al\mbox{.}(2020)]%
        {yu2020causality}
\bibfield{author}{\bibinfo{person}{Kui Yu}, \bibinfo{person}{Xianjie Guo}, \bibinfo{person}{Lin Liu}, \bibinfo{person}{Jiuyong Li}, \bibinfo{person}{Hao Wang}, \bibinfo{person}{Zhaolong Ling}, {and} \bibinfo{person}{Xindong Wu}.} \bibinfo{year}{2020}\natexlab{}.
\newblock \showarticletitle{Causality-based Feature Selection: Methods and Evaluations}.
\newblock \bibinfo{journal}{\emph{Comput. Surveys}} \bibinfo{volume}{53}, \bibinfo{number}{5} (\bibinfo{year}{2020}), \bibinfo{pages}{1--36}.
\newblock
\href{https://doi.org/10.1145/3409382}{doi:\nolinkurl{10.1145/3409382}}


\end{thebibliography}
